\documentclass{article}

\usepackage{microtype}
\usepackage{graphicx}
\usepackage{subcaption}
\usepackage{booktabs} 
\usepackage{threeparttable}  
\usepackage{multirow}
\usepackage[normalem]{ulem}
\usepackage{makecell}
\usepackage{xcolor}
\usepackage[most]{tcolorbox}
\usepackage{array}

\newcommand{\pref}{\textbf{\large$\checkmark$}}
\newcommand{\nopref}{\textcolor{gray}{\large$\times$}}

\newcommand{\cNF}[1]{\textcolor{blue!70!black}{#1}}       
\newcommand{\cCN}[1]{\textcolor{orange!80!black}{#1}}     
\newcommand{\cST}[1]{\textcolor{green!60!black}{#1}}      
\newcommand{\cFL}[1]{\textcolor{purple!70!black}{#1}}     
\newcommand{\cSTN}[1]{\textcolor{red!70!black}{#1}}       
\newcommand{\cellwrap}[1]{\parbox[t]{\linewidth}{\raggedright #1}}
    
\newcommand{\mLong}[1]{\textcolor{teal!70!black}{#1}}       
\newcommand{\mShort}[1]{\textcolor{gray!80!black}{#1}}      
\newcommand{\mInter}[1]{\textcolor{cyan!70!black}{#1}}      
\newcommand{\mInfo}[1]{\textcolor{orange!80!black}{#1}}     

\usepackage{hyperref}

\usepackage[ruled,vlined]{algorithm2e}

 \usepackage[preprint]{icml2026}

\usepackage{amsmath}
\usepackage{amssymb}
\usepackage{mathtools}
\usepackage{amsthm}

\usepackage[capitalize,noabbrev]{cleveref}

\theoremstyle{plain}
\newtheorem{theorem}{Theorem}[section]
\newtheorem{proposition}[theorem]{Proposition}

\theoremstyle{definition}
\newtheorem{definition}[theorem]{Definition}

\theoremstyle{remark}

\usepackage{booktabs}
\usepackage{tabularx}
\usepackage{xcolor}
\usepackage{tcolorbox}
\tcbset{sharp corners, boxrule=0.6pt, colback=white}

\newcommand{\mono}[1]{{\ttfamily #1}}

\newcommand{\win}[1]{\textcolor{red!70!black}{#1}}
\newcommand{\lose}[1]{\textcolor{blue!70!black}{#1}}

\newcommand{\CaseStudyTemplate}[6]{%
\begin{tcolorbox}[width=\linewidth, colframe=black]
\small
\textbf{User Query:} \mono{#1}\par
\vspace{0.35em}
\textbf{Selected Attributes $\mathcal{A}$:} #2\par
\vspace{0.6em}

\begin{tabularx}{\linewidth}{@{}p{0.18\linewidth}X@{}}
\toprule
\textbf{Original Prompt} & \mono{#3} \\
\midrule
\textbf{Modified Prompt} &
\begin{tcolorbox}[colframe=black!35, colback=black!3, boxrule=0.4pt, left=1mm, right=1mm, top=0.8mm, bottom=0.8mm]
\mono{#4}
\end{tcolorbox}
\\
\bottomrule
\end{tabularx}

\vspace{0.35em}
\textbf{Win Response:} \win{#5}\par
\vspace{0.35em}
\textbf{Lose Response:} \lose{#6}\par
\end{tcolorbox}
}

\usepackage[textsize=tiny]{todonotes}

\icmltitlerunning{EXACT: Explicit Attribute-Guided Decoding-Time Personalization}

\begin{document}

\twocolumn[
  \icmltitle{EXACT: Explicit Attribute-Guided Decoding-Time Personalization}

  \icmlsetsymbol{equal}{*}

  \begin{icmlauthorlist}
    \icmlauthor{Xin Yu}{yyy}
    \icmlauthor{Hanwen Xing}{comp}
    \icmlauthor{Lingzhou Xue}{yyy}

  \end{icmlauthorlist}

  \icmlaffiliation{yyy}{Department of Statistics, the Pennsylvania State University, PA, USA}
  \icmlaffiliation{comp}{University of Southern California, CA, USA}

  \icmlcorrespondingauthor{Lingzhou Xue}{lzxue@psu.edu}

  \icmlkeywords{Machine Learning, ICML}

  \vskip 0.3in
]



\printAffiliationsAndNotice{}  

\begin{abstract}

Achieving personalized alignment requires adapting large language models to each user’s evolving context. While decoding-time personalization offers a scalable alternative to training-time methods, existing methods largely rely on implicit, less interpretable preference representations and impose a rigid, context-agnostic user representation, failing to account for how preferences shift across prompts. We introduce EXACT, a new decoding-time personalization that aligns generation with limited pairwise preference feedback using a predefined set of interpretable attributes. EXACT first identifies user-specific attribute subsets by maximizing the likelihood of preferred responses in the offline stage. Then, for online inference, EXACT retrieves the most semantically relevant attributes for an incoming prompt and injects them into the context to steer generation. We establish theoretical approximation guarantees for the proposed algorithm under mild assumptions, and provably show that our similarity-based retrieval mechanism effectively mitigates contextual preference shifts, adapting to disparate tasks without pooling conflicting preferences. Extensive experiments on human-annotated preference datasets demonstrate that EXACT consistently outperforms strong baselines, including preference modeling accuracy and personalized generation quality.

\end{abstract}

\section{Introduction}
\label{sec: Introduction}
\vspace{-1mm}
Large language models (LLMs) have become foundational to a wide range of real-world applications, driven in part by advances in reinforcement learning from human feedback (RLHF) \citep{rafailov2023direct,ziegler2019fine}. Conventional RLHF pipelines aim to align models with population-level preferences by aggregating large-scale feedback from diverse users. While effective for general alignment, this paradigm overlooks substantial heterogeneity in individual user preferences, motivating growing interest in personalized alignment methods \citep{xu2025personalized,liu2025survey,bai2025towards}. Personalized LLMs seek to produce user-specific outputs while preserving contextual coherence, enabling applications such as personalized recommendation systems \citep{lyu2024llm,zhang2024prospect,yu2025thought}, adaptive dialogue agents \citep{wu2025interpersonal}, and customized content generation platforms \citep{mysore2024pearl}.

Within personalized generation, decoding-time alignment offers a compelling alternative to training-time methods ~\citep{zhou2024beyond,wu2023fine}, which avoid parameter updates and are well-suited to settings with sparse user preference data~\citep{chen2025pad,kim2025drift}. Existing decoding-time personalization approaches generally fall into two categories. One line of work implicitly encodes user preferences through curated prompts or instruction templates~\citep{wang2023cue,richardson2023integrating}; while simple to deploy, this relies on heuristic prompt design and offers limited interpretability of the underlying preferences. The other line of work adopts inference-time optimization, such as PAD~\citep{chen2025pad} and Drift~\citep{kim2025drift}, which explicitly modulates token generation probabilities by mixing logits from multiple models. However, these strategies often incur significant deployment overhead and rely on the assumption of a single preference profile for each user.

Crucially, the foundational theory of constructive preferences in behavioral decision making \cite{slovic1995construction} suggests that preferences are not stored as static traits, but are constructed contextually depending on the task and context. Consistent with this view, Table~\ref{tab:merged_case_preference_reversal} shows that within existing human-preference datasets, a single user often exhibits divergent style preferences across different prompts, a phenomenon we term \emph{contextual preference shifts}. This observation challenges the common assumption in the decoding-time personalization literature that a user can be represented by a single preference profile. Despite this, most existing methods continue to optimize context-agnostic user signals, failing to account for such prompt-dependent variation.

\begin{table*}[t]
\centering
\small
\setlength{\tabcolsep}{5pt}
\renewcommand{\arraystretch}{1.12}

\resizebox{\textwidth}{!}{%
\begin{tabular}{
  >{\raggedright\arraybackslash}p{0.19\linewidth}
  >{\raggedright\arraybackslash}p{0.26\linewidth}
  >{\raggedright\arraybackslash}p{0.25\linewidth}
  >{\raggedright\arraybackslash}p{0.25\linewidth}
  >{\raggedright\arraybackslash}p{0.05\linewidth}
}
\toprule
\textbf{Dataset (User)} & \textbf{Prompt (Title)} & \textbf{Response A} & \textbf{Response B} & \textbf{Style} \\
\midrule

\multirow{2}{*}{\cellwrap{\textbf{PRISM}\\\texttt{RgH765...}}}
& \cellwrap{\textbf{P-A:} \textit{Me [26M] with my fiancee... wedding invitations}}
& \cellwrap{\pref\; \textit{``Her parents printed the wedding invitation card and want her to hide it from me.''}
{\footnotesize\;\textit{(\cNF{neutral} \cNF{restatement})}}}
& \cellwrap{\nopref\; \textit{``...hide the wedding card... because of my in-laws' \cSTN{hate towards me}.''}}
& \cellwrap{\textbf{\cNF{Neutral}}\\[-1pt]\textbf{\cNF{factual}}} \\

\midrule

\multirow{2}{*}{\cellwrap{\textbf{PRISM}\\\texttt{RgH765...}}}
& \cellwrap{\textbf{P-B:} \textit{Insurance company's lawyer seeking reimbursement?}}
& \cellwrap{\pref\; \textit{``Neighbor wants money... previous owner caused damage... neighbor's insurance paid... now wants grandma to pay back. Our insurance is \cSTN{basically MIA}.''}
{\footnotesize\;\textit{(\cCN{context-rich} \cCN{narrative})}}}
& \cellwrap{\nopref\; \textit{``Neighbor's insurer asked for reimbursement... wants \$1600...''}
{\footnotesize\;\textit{(\cNF{minimal facts})}}}
& \cellwrap{\textbf{\cCN{Context}}\\[-1pt]\textbf{\cCN{rich}}\\
[-1pt]\textbf{\cCN{narrative}}} \\

\midrule

\multirow{2}{*}{\cellwrap{\textbf{Summarize from}\\\textbf{Human Feedback}\\\texttt{user5}}}
& \cellwrap{\textbf{T1:} \textit{I want to cook lasagne...}}
& \cellwrap{\nopref\; \textit{``There are many great recipes... it depends on personal preference.''}}
& \cellwrap{\pref\; \textit{``\cST{Ingredients:} ... \cST{Instructions:} 1. ... 2. ...''}}
& \cellwrap{\textbf{\cST{Structured}}\\[-1pt]\textbf{\cST{step-based}}} \\

\midrule

\multirow{2}{*}{\cellwrap{\textbf{Summarize from}\\\textbf{Human Feedback}\\\texttt{user5}}}
& \cellwrap{\textbf{T2:} \textit{Last-minute cramming tips?}}
& \cellwrap{\pref\; \textit{``Prioritize key topics, practice active recall, use mnemonics, take breaks, and sleep.''}
{\footnotesize\;\textit{(\cFL{flowing} \cFL{narrative})}}}
& \cellwrap{\nopref\; \textit{``\cST{1.} Focus on core concepts... \cST{2.} Use mnemonics...''}}
& \cellwrap{\textbf{\cFL{Flowing}}\\[-1pt]\textbf{\cFL{narrative}}} \\

\bottomrule
\end{tabular}
}

\caption{
\textbf{Contextual preference shifts within individual users.}
In \textbf{PRISM} (user \texttt{RgH765...}), the preferred style shifts from \cNF{neutral factual restatement} (P-A) to \cCN{context-rich narrative} (P-B). In \textbf{Summarize from Human Feedback} (user \texttt{user5}), the preference similarly diverges from \cST{structured step-based} formatting (T1) to \cFL{flowing narrative} advice (T2). Colored text highlights salient stylistic attributes.
}\label{tab:merged_case_preference_reversal}
\end{table*}

In this work, we propose \textbf{EXACT} (\textbf{EX}plicit \textbf{A}ttribute-guided de\textbf{C}oding-\textbf{T}ime personalization), a novel decoding-time personalization method driven by RLHF. Leveraging a large attribute set that captures diverse aspects of user preferences, EXACT identifies an optimal user-specific subset of attributes from pairwise preference feedback, maximizing the likelihood of preferred responses in the offline indexing stage. This explicit attribute representation provides a transparent and interpretable characterization of user preferences, distinguishing it from purely implicit prompt-engineering techniques. At online inference time, to guide efficient personalized generation, EXACT retrieves the most relevant attribute subset from the user’s history based on the semantic similarity of the incoming prompt and injects it into the prompt generation context. This design is highly efficient, as it requires only a single base model to be stored and utilized, eliminating the computational and storage overhead of auxiliary attribute models. Finally, our theoretical analysis shows that this similarity-based retrieval mechanism effectively mitigates contextual preference shifts by ensuring the active attribute set adapts to the semantic requirements of each prompt.

\vspace{-1mm}

Our contributions are summarized as follows:

\vspace{-1mm}
\begin{itemize}
     \item  We are the first to identify and formalize \emph{contextual preference shifts} in the decoding-time personalization literature, revealing that individual users can exhibit divergent preferences across different prompts or topics.

   \vspace{-1mm}
    \item We propose \textsc{EXACT}, a decoding-time personalization that learns explicit preference attributes from pairwise feedback. By retrieving and injecting context-relevant attributes at inference, \textsc{EXACT} efficiently mitigates preference shifts while preserving interpretability.
    
   \vspace{-1mm}
    \item We validate the effectiveness of EXACT through extensive experiments across diverse personalized text generation tasks. EXACT consistently achieves superior personalized alignment with user preferences compared to strong decoding-time baselines, particularly in settings with heterogeneous prompt contexts.
\end{itemize}

\vspace{-1mm}
\section{Related Work}

\textbf{Large language model (LLM) alignment.}
LLM alignment aims to make the LLM behavior consistent with human preferences, often using reinforcement learning from human feedback (RLHF) \citep{christiano2017deep, bai2022training}.
Broadly, existing approaches fall into two categories.
(1) RLHF with an explicit reward model: a reward model is trained from human feedback, and then policy optimization, typically Proximal Policy Optimization (PPO) \citep{schulman2017proximal}, is applied to obtain an aligned policy.
In contrast, \emph{decoding-time alignment} avoids expensive RL training by steering generation at inference time \citep{mudgal2024controlled,khanov2024args, han2024value, liu2024decoding, huang2025deal}.
(2) Direct preference optimization: methods such as Direct Preference Optimization (DPO) \citep{rafailov2023direct} train the policy directly from preference data, eliminating the need to fit a separate reward model.

\textbf{Training-time personalization.}
Most existing personalization methods operate at training time, leveraging user-specific data to adapt language models via parameter-efficient fine-tuning (PEFT) \citep{zhang2024personalized,zhu2024lifelong} or reinforcement learning (RL)--based alignment \citep{bose2025lore}.
Within the PEFT paradigm, some approaches learn a \emph{shared} set of lightweight parameters to generalize across diverse user preferences \citep{wozniak2024personalized,kong2024customizing}, while others prioritize \emph{individualized} adaptation by maintaining isolated parameter sets per user.
The latter can enhance personalization and preserve privacy by avoiding cross-user data leakage \citep{bose2025lore,zhong2021useradapter,peng2024pocketllm,tan2024democratizing}.
Beyond PEFT, RL-based methods align models with user preferences through reward-driven optimization.
Recent work further formulates personalization as a multi-objective reinforcement learning (MORL) problem: for example, MORLHF \citep{wu2023fine} and MODPO \citep{zhou2024beyond} train separate reward models for different objectives and combine them during optimization.
However, these training-time approaches typically incur substantial computational overhead.

\textbf{Decoding-time personalization.}
Decoding-time alignment avoids parameter updates and is thus attractive when only a handful of user preference examples are available \citep{chen2025pad,kim2025drift}.
Existing methods fall into two lines: (i) prompt-based approaches encode preferences via curated prompts or templates \citep{wang2023cue,richardson2023integrating}, which are easy to deploy but heuristic and less interpretable; and (ii) logit-steering adaptation methods explicitly adjust token probabilities during generation \citep{gao2025linear, li2023contrastive}, e.g., CoS \citep{he2025context}, CoSteer \citep{lv2025costeer}, and Amulet \citep{zhang2025amulet}.
While effective, logit-steering typically introduces extra inference overhead (especially with many attributes, as in Drift \citep{kim2025drift}) and may require transmitting personal signals to the model, raising privacy concerns and complicating deployment \citep{lv2025costeer}.

\section{Preliminaries}\label{sec_prelimary}

This section reviews the standard RLHF pipeline to motivate our optimization mechanism. For each prompt \(x\), we consider a pair of human-annotated responses \((y_w, y_l)\), where \(y_w\) is a preferred response and \(y_l\) is a dispreferred response.

\vspace{-2mm}

\paragraph{Human preference distribution \(p^*\).}
User preferences are assumed to be generated by an underlying latent reward function \(r^*(x,y)\), which is not directly observable. Under the Bradley--Terry (BT) model, the human preference distribution \(p^*\) over a pair of responses can be expressed as
\begin{equation}
p^{*}(y_w \succ y_l \mid x)
=
\frac{\exp\big(r^{*}(x, y_w)\big)}
{\exp\big(r^{*}(x, y_w)\big) + \exp\big(r^{*}(x, y_l)\big)} .
\end{equation}

\paragraph{KL-regularized reinforcement learning.}
To align a policy \(\pi_\theta\) with human preferences, RLHF maximizes the expected reward while penalizing deviation from a reference (or base) model \(\pi_{\text{base}}\) via a KL regularization term:
\begin{equation}
\label{eq:kl_reg_rl}
\max_{\pi_\theta}\;
\mathbb{E}_{y \sim \pi_\theta(\cdot \mid x)}
\Big[
r(x,y)
-
\beta \, \mathrm{KL}\!\big(\pi_\theta \,\|\, \pi_{\text{base}}\big)
\Big],
\end{equation}
where \(\beta > 0\) controls the strength of the regularization. The optimal solution to Eq.~\eqref{eq:kl_reg_rl} admits a closed-form expression:
\begin{equation}
\label{eq:optimal_pi}
\pi_r(y \mid x)
=
\frac{1}{Z(x)} \, \pi_{\mathrm{base}}(y \mid x)
\exp\!\left( \frac{1}{\beta} \, r(x,y) \right),
\end{equation}
where
\begin{equation}
Z(x)
=
\sum_{y}
\pi_{\mathrm{base}}(y \mid x)
\exp\!\left( \frac{1}{\beta} \, r(x,y) \right),
\end{equation}
is the partition function (see Appendix~\ref{app:kl_closed_form} for the proof). Rearranging Eq.~\eqref{eq:optimal_pi} yields an equivalent expression for the reward function:
\begin{equation}
\label{eq:reward_recovery}
r(x,y)
=
\beta \log \frac{\pi_r(y \mid x)}{\pi_{\mathrm{base}}(y \mid x)}
+
\beta \log Z(x).
\end{equation}

Direct Preference Optimization (DPO) \citep{rafailov2023direct} interprets Eq.~\eqref{eq:reward_recovery} as a bridge that eliminates the need to explicitly optimize a reward model, directly linking the latent human preference distribution to the optimal RLHF policy \(\pi^*\). Substituting Eq.~\eqref{eq:reward_recovery} into the Bradley--Terry model yields
\begin{equation}
\resizebox{0.95\linewidth}{!}{$
p^{*}(y_1 \succ y_2 \mid x)
=
\left[1 + \exp\!\left(
\beta \log \frac{\pi^{*}(y_2 \mid x)}{\pi_{\mathrm{base}}(y_2 \mid x)}
-
\beta \log \frac{\pi^{*}(y_1 \mid x)}{\pi_{\mathrm{base}}(y_1 \mid x)}
\right)\right]^{-1} .
$}
\end{equation}

More importantly, Eq.~\eqref{eq:reward_recovery} provides an explicit quantitative measure of the improvement of any modified policy relative to the reference model \(\pi_{\mathrm{base}}\), including modifications induced purely at inference time, such as changes in the prompting strategy. 
Concretely, given a prompt-level modification \(m\) that induces a new conditional policy 
\(\pi^{m}(y \mid x) := \pi_{\mathrm{base}}(y \mid x, m)\),
the corresponding implicit reward can be written as
\begin{equation}\label{promptreward}
r^{m}(x,y)
=
\beta \log \frac{\pi^{m}(y \mid x)}{\pi_{\mathrm{base}}(y \mid x)}
+ \beta \log Z^{m}(x),
\end{equation}
where \(Z^{m}(x)\) is the induced partition function.

\section{Methodology}
\label{sec:method}

Our method follows a two-stage pipeline:  Offline Indexing in Section~\ref{sec:offline_indexing}, and Online Inference in Section~\ref{sec:online_inference}.

\subsection{Learning user preferences via preference attributes}\label{sec:offline_indexing}

We study \emph{decoding-time} personalization under a fixed base large language model (LLM) $\pi_{\text{base}}$.
Let $x$ denote a user prompt and $\mathcal{A}=\{a_1,\dots,a_K\}$ be a predefined set of semantic attributes, following the attribute design in Drift~\cite{kim2025drift}. Each attribute captures a distinct aspect of user preference or stylistic intent (e.g., conciseness, vividness, formality) (see Table~\ref{tab_attribute} for the full list).
For convenience, we group these attributes into four coarse categories: \emph{style} (e.g., \textit{Formal}, \textit{Concise}), \emph{tone} (e.g., \textit{Direct}, \textit{Empathetic}), \emph{expertise} (e.g., \textit{Analytic}, \textit{Code}), and \emph{values} (e.g., \textit{Principled}, \textit{Utilitarian}).
Attributes are not mutually exclusive and can be correlated (e.g., \textit{Direct} often co-occurs with \textit{Concise}).
We model each user by an \emph{unknown} subset of attributes that governs their preference over generated responses.

Given a prompt $x$, a user (or an annotator acting on the user's behalf~\citep{lee2024aligning}) provides a preferred response $y_w$ and a dispreferred response $y_l$.
Following standard RLHF, we assume these preferences arise from a latent reward function $r^*(x,y)$, which induces a human preference distribution $p^*(y_w \succ y_l \mid x)$ (see Section~\ref{sec_prelimary}).

Our goal is to infer user-specific preference attributes \emph{at decoding-time} and use them to steer personalized generation \emph{without} updating model parameters.

\paragraph{Attribute-guided prompt modification.}

Let \(\pi(\cdot)\) denote the base language model.
Given a set of selected attributes \(A \subseteq \mathcal{A}\), we construct an attribute-augmented prompt by concatenating the original prompt with natural-language descriptions of the attributes,
\[
x^{A} = (x, A),
\]
where we modify the user prompt by appending an explicit attribute block. Specifically, after the original user query, we add a new line that starts with the keyword “Attributes:”, followed by a comma-separated list of the selected attribute subset (e.g., Direct, Concise, Analytic). Figures~\ref{fig:case_prism_1}--\ref{fig:case_sff_3} show the exact template used in our studies.
This yields a modified conditional policy
\begin{equation}
\pi^{A}(y \mid x)
\;:=\;
\pi(y \mid x^{A}) = \pi(y \mid x, A).
\end{equation}

Importantly, this modification is performed purely at inference time and does not alter the model parameters.

\paragraph{Optimization objective from human preferences.}
Applying Eq. (\ref{promptreward}) to the attribute-guided policy \(\pi^{A}\), we obtain the induced reward
\begin{equation}
r^{A}(x,y)
=
\beta \log \frac{\pi^{A}(y \mid x)}{\pi(y \mid x)}
+ \beta \log Z^{A}(x).
\end{equation}

Substituting into the Bradley-Terry model yields

\begin{align}
\begin{split}
& p^{*}(y_w \succ y_l \mid x)\\
&=
\frac{1}
{1 + \exp\!\left(
\beta \log \frac{\pi^{A}(y_l \mid x)}{\pi(y_l \mid x)}
-
\beta \log \frac{\pi^{A}(y_w \mid x)}{\pi(y_w \mid x)}
\right)} \\
& \propto
\log \pi^{A}(y_w \mid x)
-
\log \pi^{A}(y_l \mid x),
\end{split}
\end{align}

where the last line follows from the monotonicity of the logistic function, 
and we only retain the terms that depend on the induced policy $\pi^{A}$.
Constants and reference-policy terms that do not affect the ordering are omitted.
Therefore, learning user-specific personalization reduces to selecting an attribute
set $A$ such that the induced policy $\pi^{A}$ maximizes the likelihood of the
observed human preference data. We can find the most important ones for user preference via
\begin{equation}\label{exact_obj}
\max_{A \subseteq \mathcal{A}} \quad 
\log \pi(y_w \mid x, A) -
\log \pi(y_l \mid x, A).
\end{equation}
We adopt an efficient greedy search procedure to select a set of attributes with size $k$ from the $K$ candidates (See Algorithm \ref{alg:greedy_attr_selection}) to characterize the user's preference perfectly. 
This reduces the number of objective evaluations from $\mathcal{O}(2^K)$ (or $\mathcal{O}(\binom{K}{k})$ when restricting to size-$k$ subsets) to $\mathcal{O}(Kk)$.
We provide a theoretical approximation guarantee for the greedy selection under mild conditions in Section \ref{sec:weak_submodularity_main}.

As shown in Algorithm~\ref{alg:greedy_attr_selection}, we obtain a prompt-specific attribute subset for each preference pair in $\mathcal{D}$. Since the data may contain duplicate prompts, we perform prompt-level deduplication when constructing the prompt--attribute index: for each repeated prompt, we retain only the attribute subset that attains the largest objective value in Eq.~\eqref{exact_obj}. This yields a unique representative attribute set for each distinct prompt in $\mathcal{D}$.

\begin{algorithm}[h]
\caption{Attribute Selection for Each Prompt (Per User)}
\label{alg:greedy_attr_selection}
\DontPrintSemicolon
\small 

\KwIn{
Attribute library $\mathcal{A}=\{a_1,\dots,a_K\}$; budget $k$;\;
preference pairs $\mathcal{D}=\{(x^{(n)},y_w^{(n)},y_l^{(n)})\}_{n=1}^{N}$;\;
base model $\pi$.
}
\KwOut{
Per-prompt subsets $\{A^\star(x^{(n)})\}_{n=1}^{N}$ with $|A^\star(x^{(n)})|\le k$.
}

\For{$n \in [N]$}{
    $A \leftarrow \emptyset$\;
\For{$t \in [k]$}{
  $a^\star \leftarrow \arg\max\limits_{a \in \mathcal{A}\setminus A}$\:
  $\log \pi\!\big(y_w^{(n)} \mid x^{(n)}, A \cup \{a\}\big)
    - \log \pi\!\big(y_l^{(n)} \mid x^{(n)}, A \cup \{a\}\big)
  $\\
  \\
  $A \leftarrow A \cup \{a^\star\}$\;
}
    $A^\star(x^{(n)}) \leftarrow A$\;
}
\end{algorithm}

\subsection{Attribute retrieval at inference time}\label{sec:online_inference}

User preferences may vary substantially across topics even for the same individual.
As illustrated in Table~\ref{tab:merged_case_preference_reversal}, two semantically distinct prompts from the same user can induce
nearly opposite preference patterns, making a single static attribute profile vulnerable to \emph{contextual preference shifts}. To address this, we adapt user preference attributes to each new prompt at inference time via retrieval.

For a fixed user, suppose we have observed $s$ historical prompts $\{x_i\}_{i=1}^s$, each paired with an
inferred attribute subset $A_i \subseteq \mathcal{A}$.
Given a new prompt $x$, we first encode prompts into normalized embedding using LLMs and compute cosine similarity in the embedding space.
We retrieve the most relevant historical prompt by
\[
i^* = \arg\max_{i \in \{1,\dots,s\}} \; \mathrm{cos}(x, x_i),
\]
and use the corresponding attributes $A_{i^*}$ to guide generation for $x$. We summarize the overall inference-time retrieval and attribute-injection procedure in Algorithm~\ref{alg:attr_retrieval}. By selecting attributes conditioned on prompt similarity, it effectively matches the topic-dependent preference profile, rather than enforcing a global attribute set shared across all prompts.

When new preference feedback becomes available, we infer an attribute set $A_{s+1}$ for the new prompt
$x_{s+1}$ and append $(x_{s+1}, A_{s+1})$ to the user memory.
This lightweight update supports online personalization without model updates, while improving
generalization across semantically related prompts and handling preference shift across topics.

\begin{algorithm}[h]
\caption{Inference-Time Attribute Retrieval and Personalized Generation}
\label{alg:attr_retrieval}
\DontPrintSemicolon
\KwIn{User memory $\mathcal{M}=\{(x_i,A_i)\}_{i=1}^{s}$; new prompt $x$; embedding function $\phi(\cdot)$; base model $\pi$.}
\KwOut{Personalized response $\hat{y}$.} 
\BlankLine
\textbf{Step 1: Retrieve most relevant prompt:}\\
$\quad \quad i^\star \leftarrow \arg\max_{i\in\{1,\dots,s\}} \mathrm{cos}(\phi(x),\phi(x_i))$\\
\textbf{Step 2: Apply retrieved attributes:}\\
$\hat{y} \leftarrow \pi(\cdot \mid x, A_{i^{\star}})$\;
\Return{$\hat{y}$ (and $\mathcal{M}$)}\;
\end{algorithm}

\vspace{-5mm}

\paragraph{Computation and latency.}
Our method separates personalization into an \emph{offline} indexing stage (Algorithm~\ref{alg:greedy_attr_selection}) and an \emph{online} inference stage (Algorithm~\ref{alg:attr_retrieval}).
In the offline stage, we infer a prompt-specific attribute subset for each historical preference pair and construct a retrieval index. This one-shot cost can be amortized across future queries and updated incrementally as new feedback arrives.
At inference time, personalization adds only two lightweight steps: (i) computing an embedding for the incoming prompt and (ii) performing prompt-similarity-based retrieval, after which generation proceeds with the base model.
Under the experimental setting of Table~\ref{tab:two_datasets_three_models}, we find that these two steps for personalization incurs only a modest overhead, accounting for $5.27\%$ of the total online inference time.

\section{Theoretical Analysis}

This section provides theoretical support for two key design choices in EXACT.
First, we justify our greedy attribute selection procedure by framing it as a cardinality-constrained set maximization problem, and we establish an approximation guarantee under weak submodularity in Section~\ref{sec:weak_submodularity_main}.
Second, we formalize the challenge of \emph{contextual preference shifts} and analyze why retrieval-based, context-relevant personalization is beneficial: casting the shift as a multi-task mean estimation problem, we show that global pooling can suffer an irreducible bias shift, whereas retrieval yields a topic-relevant estimate that mitigates this bias under mild separation and sample complexity conditions in Section~\ref{subsec:topic_drift_retrieval}.

\subsection{Greedy attribute selection
}
\label{sec:weak_submodularity_main}

We formulate attribute selection as a cardinality-constrained set maximization.
Let $\mathcal{A}=\{a_1,\dots,a_K\}$ be the attribute library and define
\begin{equation}
\resizebox{0.92\linewidth}{!}{$
\begin{aligned}
F(A)
&:= \mathbb{E}_{(x,y_w,y_l)\sim \mathcal{D}}
\Big[\log \frac{\pi(y_w \mid x, A)}{\pi(y_w \mid x)}
- \log \frac{\pi(y_l \mid x, A)}{\pi(y_l \mid x)} \Big].
\end{aligned}
$}
\label{eq:set_obj_main}
\end{equation}

Our goal is to select at most $k$ attributes:
\begin{equation}
\max_{A \subseteq \mathcal{A},\; |A|\le k} F(A).
\label{eq:set_max_main}
\end{equation}

We adopt the standard greedy procedure that iteratively adds the attribute with the largest marginal gain.
This requires only $\mathcal{O}(Kk)$ objective evaluations, compared to exhaustive search
$\mathcal{O}(2^K)$ (or $\mathcal{O}(\binom{K}{k})$ when restricting to size-$k$ subsets).

Although $F$ need not be exactly submodular due to attribute interactions, we can characterize its
deviation from submodularity via the \emph{submodularity ratio} $\gamma\in(0,1]$
(Definition~\ref{def:submod_ratio} in Appendix~\ref{sec:weak_submodularity_app} or Theorem 6 in \citet{JMLR:v19:16-534}).
Under mild conditions (normalized, monotone, and weakly submodular in the sense of $\gamma$),
greedy attribute selection enjoys a constant-factor approximation guarantee.

\begin{theorem}[Greedy guarantee under weak submodularity]
\label{thm:weak_submod_greedy_main}
Let $S_k$ be the output of greedy selection, and let
$S^* \in \arg\max_{|A|\le k}F(A)$ be an optimal solution to~\eqref{eq:set_max_main}.
If $F$ is normalized and monotone and has submodularity ratio $\gamma$,
then
\begin{equation}
F(S_k) \;\ge\; \big(1-e^{-\gamma}\big)\,F(S^*).
\label{eq:weak_bound_main}
\end{equation}
\end{theorem}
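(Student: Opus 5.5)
The plan follows the classical Nemhauser--Wolsey--Fisher argument, with the submodularity ratio (in the sense of Das and Kempe, Definition~\ref{def:submod_ratio}) standing in for the diminishing-returns inequality. I will use the ratio in the form: for all $L\subseteq\mathcal{A}$ and $S$ with $S\cap L=\emptyset$ (or at least $|S|\le k$),
\[
\sum_{a\in S}\big(F(L\cup\{a\})-F(L)\big)\;\ge\;\gamma\,\big(F(L\cup S)-F(L)\big).
\]
Write $S_0=\emptyset$ and $S_i$ for the greedy set after $i$ steps. Normalization gives $F(\emptyset)=0$, which is automatic from \eqref{eq:set_obj_main} since $\pi(\cdot\mid x,\emptyset)=\pi(\cdot\mid x)$. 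Define the gap $\Delta_i := F(S^*)-F(S_i)$.

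\textbf{Step 1 (one-step progress).} Fix $i<k$ and apply the ratio with $L=S_i$ and $S=S^*\setminus S_i$. Monotonicity gives $F(S_i\cup S^*)\ge F(S^*)$, so
\[
\sum_{a\in S^*\setminus S_i}\big(F(S_i\cup\{a\})-F(S_i)\big)\ge \gamma\big(F(S^*)-F(S_i)\big)=\gamma\Delta_i.
\]
The sum has at most $k$ terms. The greedy step picks the attribute with the largest marginal gain, so
\[
F(S_{i+1})-F(S_i)\ge \max_{a\in S^*\setminus S_i}\big(F(S_i\cup\{a\})-F(S_i)\big)\ge \tfrac{\gamma}{k}\Delta_i.
\]
If $S^*\setminus S_i=\emptyset$, monotonicity already gives $F(S_i)\ge F(S^*)$ and we are done. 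The argmax in Algorithm~\ref{alg:greedy_attr_selection} is over $\mathcal{A}\setminus A$, which contains $S^*\setminus S_i$, so the comparison is legitimate.

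\textbf{Step 2 (recursion).} Step 1 gives $\Delta_{i+1}\le(1-\gamma/k)\Delta_i$. Iterating from $\Delta_0=F(S^*)$ yields
\[
\Delta_k\le(1-\gamma/k)^kF(S^*)\le e^{-\gamma}F(S^*),
\]
using $1-t\le e^{-t}$ and $F(S^*)\ge0$, which holds by monotonicity and normalization. Rearranging gives \eqref{eq:weak_bound_main}.

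\textbf{Main obstacle.} The delicate point is matching the definition of $\gamma$ to the sets actually used. The ratio must be defined over pairs $(L,S)$ with $L$ ranging over greedy iterates of size less than $k$ and $|S|\le k$, as in the Das--Kempe/JMLR formulation. I would state that restriction explicitly so the inequality in Step 1 is justified. A second point is that the objective being optimized must be the same $F$ in both Algorithm~\ref{alg:greedy_attr_selection} and \eqref{eq:set_obj_main}. This holds because the reference terms $\log\pi(y\mid x)$ are constant in $A$, so greedy on the log-ratio difference coincides with greedy on $F$ and selects the same attributes.
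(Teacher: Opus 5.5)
Your proposal is correct and follows essentially the same argument as the paper. Both apply the submodularity ratio to the pair $(S_t, S^*\setminus S_t)$, use monotonicity to get the lower bound $\gamma\big(F(S^*)-F(S_t)\big)$, average over at most $k$ elements, compare with the greedy choice, and unroll using $(1-\gamma/k)^k\le e^{-\gamma}$. Your extra remarks are refinements the paper leaves implicit: the case $S^*\subseteq S_i$, the need for $F(S^*)\ge 0$, and the observation that only ratio pairs of the form $(S_i, S^*\setminus S_i)$ are actually used.
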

\vspace{-2mm}

A full statement of the assumptions, the definition of $\gamma$, and the complete proof of
Theorem~\ref{thm:weak_submod_greedy_main} are provided in Appendix~\ref{sec:weak_submodularity_app}. 
To link Theorem~\ref{thm:weak_submod_greedy_main} to practice, Table~\ref{tab:greedy_vs_optimal_gamma} in Appendix \ref{app_empiricalvalidation} compares the output of greedy selection $S_k$ with the exhaustive optimum $S^*$ on a real dataset at $k{=}3$.
Greedy attribute selection achieves a high ratio $F(S_k)/F(S^*)$ and a small gap, offering empirical support.


\subsection{Contextual preference shifts across topics}
\label{subsec:topic_drift_retrieval}

Retrieving context-relevant attributes via similarity search can mitigate a user's preference shift across topics.
To formalize this intuition, we cast contextual preference shifts as a classical \emph{multi-task mean estimation} problem \citep{diakonikolasefficient} and analyze how retrieval improves over pooling by producing a topic-relevant preference estimate.

\paragraph{Setup.}
Consider a fixed user whose prompts come from $k$ latent topics/tasks.
Let $\phi(x)\in\mathbb{R}^d$ be a prompt representation and $t(x)\in\{1,\dots,k\}$ its latent topic.
We model a mixture-of-topics:
\vspace{-1mm}
\begin{equation}
\phi(x)=\mu_{t(x)}+\varepsilon,\qquad \mathbb{E}[\varepsilon]=0,
\label{eq:mt_mean_model_main}
\end{equation}
where $\varepsilon$ is $\sigma^2$-subGaussian (within-topic concentration), and topic centers are separated:
\vspace{-1mm}
\begin{equation}
\|\mu_j-\mu_\ell\|_2 \ge r,\ \forall j\neq \ell,\qquad r\gg \sigma.
\label{eq:topic_sep_main}
\end{equation}

The training set has $m_1$ prompts with $n_j$ samples from topic $j$ (weight $p_j=n_j/m_1$), and we consider a test prompt
$x$ with $t(x)=j$.

\paragraph{Pooling vs.\ Retrieval.}
Pooling uses one global estimate $\widehat{\mu}_{\mathrm{pool}}=\frac{1}{m_1}\sum_{i=1}^{m_1}\phi(x_i)$.
Retrieval forms topic-wise centers $\widehat{\mu}_c=\frac{1}{n_c}\sum_{i:t(x_i)=c}\phi(x_i)$ and predicts the topic of $x$
by nearest center, outputting $\widehat{\mu}_{\mathrm{ret}}(x)=\widehat{\mu}_{\widehat{j}(x)}$.

\begin{theorem}[Retrieval mitigates contextual preference shifts]
\label{thm:retrieval_beats_pooling_main}
Let $\Sigma_j=\mathrm{Cov}(\varepsilon\mid t(x)=j)$ and assume $\mathrm{tr}(\Sigma_j)\le d\sigma^2$.
For a test prompt $x$ with $t(x)=j$, define the mixture mean $\bar{\mu}=\sum_{c=1}^k p_c\mu_c$.
Then pooling satisfies
\vspace{-2mm}
\begin{equation}
\mathbb{E}\!\left[\|\widehat{\mu}_{\mathrm{pool}}-\mu_j\|_2^2\right]
=
\|\bar{\mu}-\mu_j\|_2^2
+
\frac{1}{m_1}\sum_{c=1}^k p_c\,\mathrm{tr}(\Sigma_c),
\label{eq:pool_mse_main}
\end{equation}
\vspace{-2mm}
hence it can incur an $\mathcal{O}(r^2)$ drift bias for some topics.

Moreover, if $\varepsilon$ is $\sigma^2$-subGaussian and
\vspace{-2mm}
\begin{equation}
n_{\min}\triangleq \min_{c\in[k]}n_c \ \ge\ C\cdot \frac{\sigma^2}{r^2}\Big(d+\log\frac{k}{\delta}\Big),
\label{eq:retrieval_sc_main}
\end{equation}
then with probability at least $1-\delta$, retrieval identifies the correct topic and its difference is variance-dominated:
\begin{equation}
\mathbb{E}\!\left[\|\widehat{\mu}_{\mathrm{ret}}(x)-\mu_j\|_2^2\right]
\ \le\
\frac{\mathrm{tr}(\Sigma_j)}{n_j} + 4\delta r^2
\ \le\
\frac{d\sigma^2}{n_j}+4\delta r^2.
\label{eq:ret_mse_main}
\end{equation}
\end{theorem}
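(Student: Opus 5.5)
The plan splits into two parts: an exact bias--variance computation for pooling, and a concentration-plus-union-bound argument for retrieval.

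\textbf{Pooling.} Condition on the topic counts $n_c$, which are treated as fixed. Writing $\phi(x_i)=\mu_{t(x_i)}+\varepsilon_i$ gives
\[
\widehat{\mu}_{\mathrm{pool}}=\sum_c p_c\mu_c+\frac{1}{m_1}\sum_i\varepsilon_i=\bar{\mu}+\frac{1}{m_1}\sum_i\varepsilon_i .
\]
Since the noises have mean zero, the cross term vanishes. If we also take them independent, the variance is
\[
\frac{1}{m_1^2}\sum_c n_c\,\mathrm{tr}(\Sigma_c)=\frac{1}{m_1}\sum_c p_c\,\mathrm{tr}(\Sigma_c),
\]
which yields \eqref{eq:pool_mse_main} exactly. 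For the $\mathcal{O}(r^2)$ claim, take two topics with $p_1=p_2=1/2$. Then $\|\bar{\mu}-\mu_1\|_2=\|\mu_1-\mu_2\|_2/2\ge r/2$, so the bias is at least $r^2/4$ no matter how large $m_1$ is.

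\textbf{Retrieval.} Define the good event $E$ on which both of the following hold:
\begin{enumerate}
\item $\|\widehat{\mu}_c-\mu_c\|_2\le r/8$ for every $c$;
\item the test noise satisfies $\|\varepsilon\|_2<r/4$.
\end{enumerate}
For the first condition, $\widehat{\mu}_c-\mu_c$ is an average of $n_c$ independent $\sigma^2$-subGaussian vectors, so it is $\sigma^2/n_c$-subGaussian. The standard norm bound (a covering argument over the sphere) gives
\[
\|\widehat{\mu}_c-\mu_c\|_2\le C'\sigma\sqrt{(d+\log(k/\delta))/n_c}
\]
with probability $1-\delta/(2k)$. A union bound over $c$, together with \eqref{eq:retrieval_sc_main} for a large enough constant $C$, makes this at most $r/8$.

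On $E$ the nearest-center rule picks $j$. We have $\|\phi(x)-\widehat{\mu}_j\|<r/4+r/8$, while for $\ell\ne j$ the triangle inequality gives $\|\phi(x)-\widehat{\mu}_\ell\|\ge r-r/4-r/8$, which is strictly larger.

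The second condition is the step I expect to be the main obstacle. A single test prompt's noise has norm of order $\sigma\sqrt{d}$, and this must be below $r/4$. That holds under the stated separation $r\gg\sigma$ only if we read it as $r\gtrsim\sigma\sqrt{d+\log(1/\delta)}$, or else we classify using projections onto center differences. In the latter case we need only $r\gtrsim\sigma\sqrt{\log(k/\delta)}$, since $\langle\varepsilon,\mu_j-\mu_\ell\rangle/\|\mu_j-\mu_\ell\|$ is scalar $\sigma^2$-subGaussian. I would use the projection argument, with a union bound over the $k-1$ competitors, and absorb any leftover requirement into the assumption $r\gg\sigma$.

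\textbf{MSE decomposition.} Split the expectation as
\[
\mathbb{E}\|\widehat{\mu}_{\mathrm{ret}}-\mu_j\|_2^2=\mathbb{E}\bigl[\|\widehat{\mu}_j-\mu_j\|_2^2\mathbf{1}_E\bigr]+\mathbb{E}\bigl[\|\widehat{\mu}_{\widehat{j}}-\mu_j\|_2^2\mathbf{1}_{E^c}\bigr].
\]
The first term is at most $\mathbb{E}\|\widehat{\mu}_j-\mu_j\|_2^2=\mathrm{tr}(\Sigma_j)/n_j$. For the second term, on $E^c$ I bound $\|\widehat{\mu}_{\widehat{j}}-\mu_j\|_2$ by a quantity of order $2r$. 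This treats the topic centers as having diameter comparable to $r$, which is implicitly assumed for the $4r^2$ constant, and keeps estimation error controlled by truncation, or alternatively conditions on the estimation part of $E$. Multiplying by $\Pr(E^c)\le\delta$ gives $4\delta r^2$. Finally, $\mathrm{tr}(\Sigma_j)\le d\sigma^2$ gives the last inequality in \eqref{eq:ret_mse_main}. Making this constant clean on $E^c$ may require stating the diameter assumption explicitly; I would flag it as such rather than force it.
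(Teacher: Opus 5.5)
Your proposal is correct modulo the two assumptions you flag, and it follows the same route as the paper's appendix proof: bias--variance for pooling, concentration of the centers plus a union bound, triangle-inequality nearest-center classification on a good event that also controls the test noise, and a good/bad split of the MSE with a $4r^2$ penalty on the bad event. The paper's proof silently relies on both of your flagged assumptions: its test-noise bound $\|\varepsilon\|_2\le r/8$ cannot come from enlarging $C$ in the $n_{\min}$ condition, and its bound $\sup_{c\neq j}\|\widehat{\mu}_c-\mu_j\|_2\le 2r$ needs a diameter assumption, since separation only bounds center distances from below. Your execution is also tighter than the appendix's in three places: comparing $\phi(x)$ with the true centers gives $3r/8<5r/8$, whereas the paper's detour through $\|\widehat{\mu}_c-\widehat{\mu}_j\|_2\ge r/2$ ends with the inconclusive pair $r/8$ versus $3r/8$; your conditioning on the counts $n_c$ is what makes the pooling variance exactly $\frac{1}{m_1}\sum_c p_c\,\mathrm{tr}(\Sigma_c)$; and your projection variant lowers the needed separation from $\sigma\sqrt{d+\log(1/\delta)}$ to $\sigma\sqrt{\log(k/\delta)}$, provided you account for the estimated centers, e.g.\ by projecting onto $\widehat{\mu}_\ell-\widehat{\mu}_j$, which is independent of the test noise.
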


\vspace{-2mm}
Theorem \ref{thm:retrieval_beats_pooling_main} shows that pooling targets the mixture mean and suffers an irreducible bias for some topics,
while retrieval estimates a topic-relevant center and removes this bias floor under mild sample complexity. In Table \ref{tab:obj_pos_rate}, we compare these two strategies to build retrieval indexes in personalized modeling. It further supports the effectiveness of our method to handle the challenge of contextual preference shifts in each user's preferences.

\section{Experiments}
\label{sec:experiments}

In analogy to previous work \citep{kim2025drift}, we evaluate our method on two tasks: 1) efficient few-shot preference modeling and 2) personalized generation. In section \ref{subsec:data_setup}, we will discuss the datasets and experiment setup. In section \ref{sec:personalizationmodel}, we evaluate different approaches for how close the approximation for the user preference choice. In section \ref{sec:generation}, to further evaluate the personalization alignment for each user, llm-as-judge is applied to report the winrate of each methods over the generation of pure LLM. Fianlly, in Section \ref{sec_detailed}, we provide detailed case studies for EXACT.

\subsection{Data and experiment setup}
\label{subsec:data_setup}

\paragraph{\emph{PRISM} \citep{kirk2024prism}.}
PRISM is a \emph{human-annotated} dataset of user preferences collected via pairwise comparisons. In each example, annotators are shown two LLM-generated candidate responses to the same prompt and select the one they prefer, so the supervision comes from humans while the response content is produced by LLMs. The dataset is inherently sparse at the user level, containing on average about $19.5$ annotated comparisons per user (and as few as $\sim$10 for some users), which makes it well-suited for evaluating few-shot personalization. Because multiple comparisons can be collected for a single prompt by sampling different candidate generations, PRISM may contain repeated prompts across examples, while the paired responses differ from one comparison to another.
\vspace{-2mm}
\paragraph{\emph{Summarize From Human Feedback} \cite{stiennon2020learning}.}
 \citet{stiennon2020learning} constructs a human-preference dataset for abstractive summarization on a version of the TL;DR dataset of Reddit posts \citep{volske2017tl}, where the source document (prompt) is a human-written Reddit post. 
For each post, candidate summaries are machine-generated by LLM policies, and preference labels are obtained from human annotators who compare two model-generated summaries and choose the better one; the paper reports releasing \emph{over 64K} such pairwise summary comparisons. 
\vspace{-2mm}
\paragraph{Data processing.}
To account for randomness in user sampling, we repeat the entire sampling procedure with \emph{three} different random seeds, each producing an independent set of selected users. We report all metrics as the mean and standard deviation over these sampled user sets.
For \emph{PRISM}, we randomly sample 10 users, and each selected user has about 20 preference pairs on average.
For \emph{Summarize From Human Feedback}, the raw data contains a large number of users with highly imbalanced numbers of pairs. We sample 10 users whose per-user preference pairs fall in the range [20, 100] to ensure comparable supervision across users.
\begin{table*}[t]
\centering
\small
\setlength{\tabcolsep}{8pt}
\renewcommand{\arraystretch}{1.25}
\resizebox{0.82\textwidth}{!}{%
\begin{tabular}{@{}llccc@{}}
\toprule
\textbf{Dataset} & \textbf{Method} & \textbf{Llama-3.1-8B} & \textbf{Gemma-2-9B-it} & \textbf{Qwen2.5-7B-Instruct} \\
\midrule
\multirow{4}{*}{\emph{PRISM}}
& Base   & 57.22 (0.21) & 59.27 (0.16) & 54.83 (0.11) \\ 
& Drift  & 58.96 (0.47) & 61.31 (0.54) & 56.16 (0.28) \\
& Reward & 54.78 (0.11) & 58.91 (0.23) & 49.58 (1.02) \\
& \textbf{EXACT} & \textbf{66.62} (0.98) & \textbf{65.93} (0.15) & \textbf{66.41} (0.64) \\ 
\midrule
\multirow{4}{*}{\emph{Summarize from Human Feedback}}
& Base   & 57.54 (0.18) & 63.58 (0.18) & 54.04 (0.12) \\
& Drift  & 58.68 (0.13) & 58.21 (0.12) & 55.63 (0.16) \\
& Reward & 54.21 (0.15) & 54.33 (0.13) & \textbf{60.75} (0.21) \\
& \textbf{EXACT} & \textbf{65.12} (0.14) & \textbf{66.01} (0.13) & 60.28 (0.15) \\
\bottomrule
\end{tabular}%
}
\caption{Personalized modeling results: mean accuracy (\%) and standard deviation of different methods across three base models, evaluated on two datasets (PRISM and Summarize from Human Feedback).}
\label{tab:two_datasets_three_models}
\end{table*}
\vspace{-2mm}
\paragraph{Hyperparameters.}
For each user, we randomly shuffle all pairs and split them into training and test sets using an 8:2 ratio.
We use an attribute library (See Table \ref{tab_attribute} in Appendix \ref{sec:attributes}) of size $K=42$ and select exactly $k$ attributes per training instance via greedy selection, with the default $k=3$.
For generations, we use sampling with temperature $0.7$, top-$k=50$, top-$p=0.95$, and $\texttt{max\_new\_tokens}=200$.
\vspace{-2mm}
\paragraph{Base models.}
Prior work on decoding-time personalization (e.g., Drift; \citealp{kim2025drift}) typically evaluates on open, instruction-tuned backbones such as \emph{Llama-3.1-8B} \citep{grattafiori2024llama} and \emph{Gemma-2-9B-it} \citep{team2024gemma}, and may use smaller variants (e.g., \emph{Llama-3.2-1B} or \emph{Gemma-2-2B-it}) as the \emph{small} models required by the Drift pipeline.
We adopt comparable open-weight base LLMs for fair comparison, and additionally include \emph{Qwen2.5-7B-Instruct} \citep{yang2025qwen3} as an extra base model to broaden coverage across model families (with \emph{Qwen2.5-0.5B-Instruct} as its corresponding \emph{small} model when needed).

\vspace{-3mm}
\paragraph{Baselines and our methods.}
We compare the following approaches, all built on the same instruction-tuned base LLM and evaluated under identical data splits and inference configurations. Further experimental details (compute environment, decoding configuration, and baseline implementations) are described in Appendix \ref{sec:exp_setup}.
\begin{itemize}
\vspace{-2mm}
    \item \textbf{Base.} The backbone LLM uses a naive history-injection baseline: for each prompt, we prepend the user's available training history (prompts and preference choices) as context, then generate an answer.
    \vspace{-2mm}
    \item \textbf{Drift} \cite{kim2025drift}. A decoding-time personalization baseline that represents each user with an explicit attribute-weight vector and composes attributes via a \emph{linear combination} to form a personalized control signal at inference time.
    \vspace{-2mm}
    \item \textbf{Reward.} A reward-based prompting baseline that learns a personalized reward model from each user’s pairwise preferences (via a per-user LoRA adapter) and uses it at test time to score candidate responses; the preferred output is selected by comparing reward scores.
    \vspace{-2mm}
    \item \textbf{\textsc{EXACT}.} Our method performs retrieval-augmented attribute selection: for a new prompt, it retrieves the most relevant attributes from the user’s history and injects the selected subset into the prompt for personalized generation.
\end{itemize}

\subsection{Personalization modeling accuracy}\label{sec:personalizationmodel}

We evaluate each method on the test set of each user. For every test prompt, we check whether the personalization approach assigns a higher preference score (or higher probability) to the ground-truth preferred response than to the dispreferred one, and compute the resulting pairwise accuracy for each user. We repeat the experiment three times with different random seeds (each inducing a different sampled user set). For each run, we average accuracies across users and report the mean and standard deviation of these run-level means. For similarity-based retrieval, we compared multiple embedding models (See Appendix \ref{sec:setup_EXACT}) and use BGE-small in our main experiments.

\begin{figure}[!t]
  \centering
  \begin{subfigure}[t]{0.62\linewidth}
    \centering
    \includegraphics[width=\linewidth]{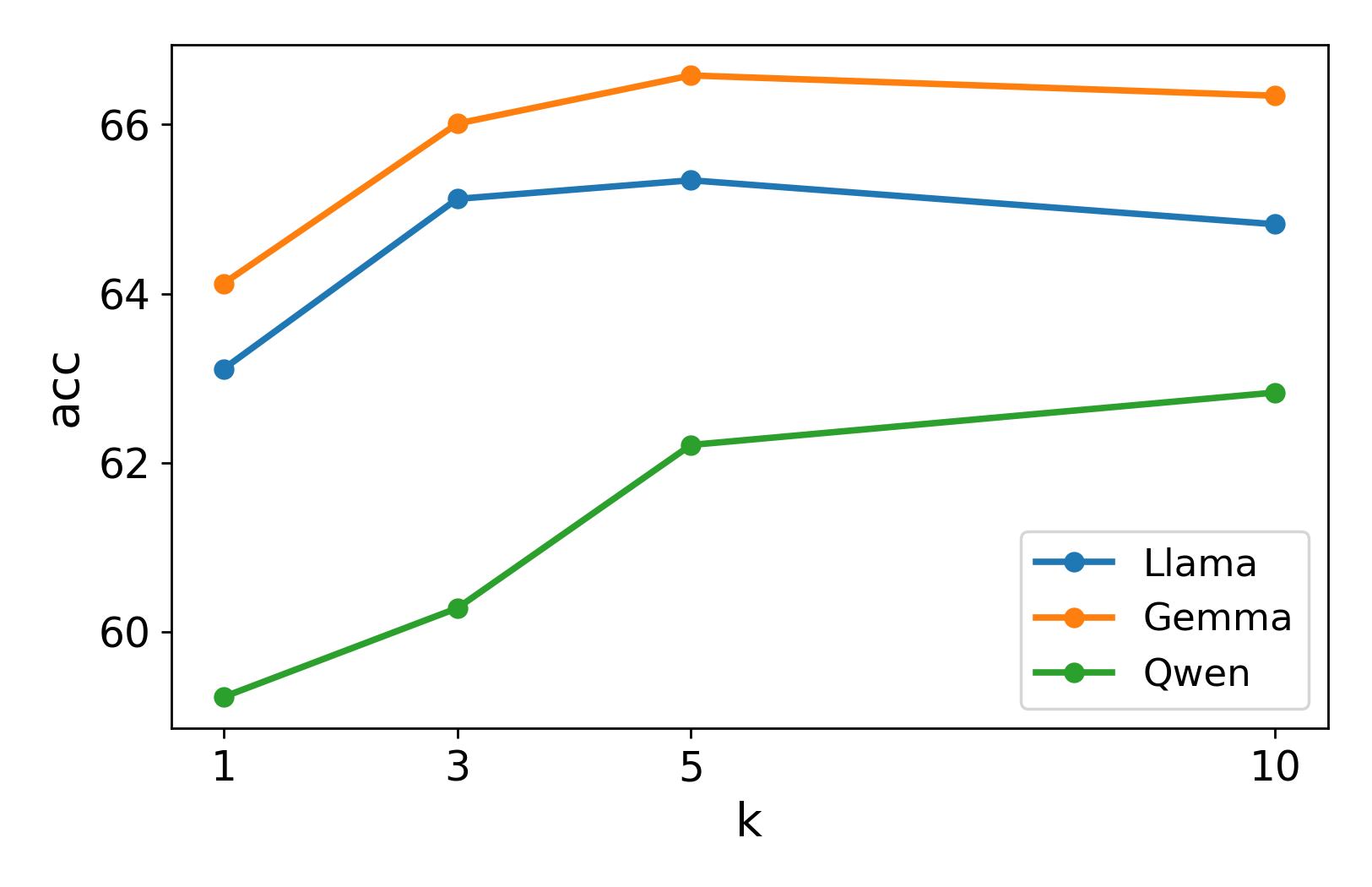}
    \caption{Summarize from Human Feedback.}
    \label{fig:sff_k}
  \end{subfigure}
  \vspace{-0.1em}
  \begin{subfigure}[t]{0.62\linewidth}
    \centering
    \includegraphics[width=\linewidth]{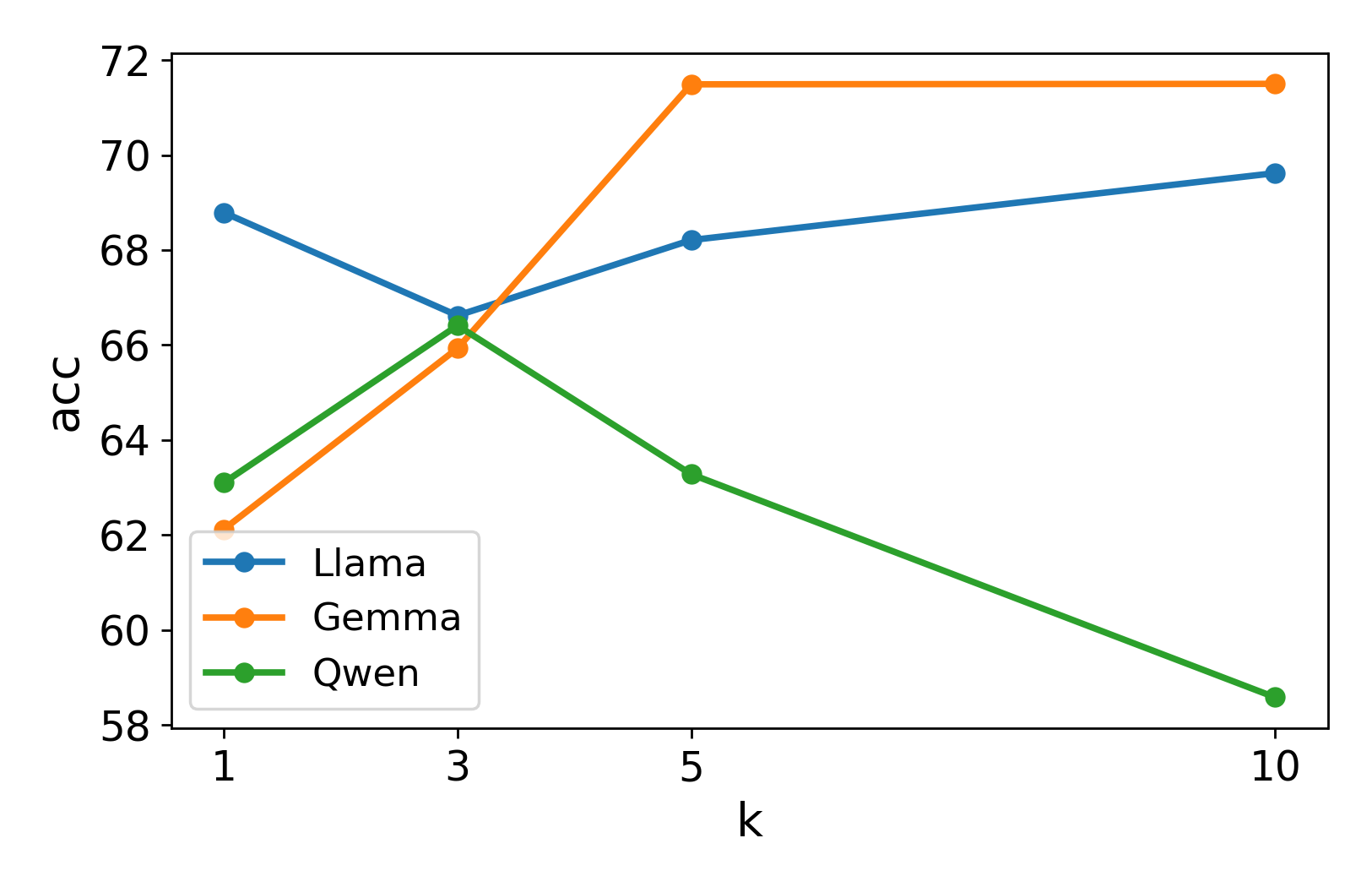} 
    \caption{PRISM.}
    \label{fig:prism_k}
  \end{subfigure}
   \vspace{-1mm}
  \caption{\textbf{EXACT with different $k$.} We evaluate EXACT with varying attribute budget $k\in\{1,3,5,10\}$ on three open-weight instruction-tuned backbones (Llama-3.1-8B, Gemma-2-9B-it, and Qwen2.5-7B-Instruct), reporting pairwise accuracy (\%). Overall, performance improves as $k$ increases from 1 to 5, indicating that incorporating a small set of retrieved attributes is beneficial. Gains saturate thereafter (and may slightly fluctuate at $k{=}10$), suggesting diminishing returns from adding more attributes.}
  \label{fig:k_sweep_two_datasets}
   \vspace{-5mm}
\end{figure}
\begin{table*}[t]
\centering
\small
\setlength{\tabcolsep}{6pt}      
\renewcommand{\arraystretch}{1.10} 
\resizebox{0.82\textwidth}{!}{
\begin{tabular}{@{}llccc@{}}
\toprule
\textbf{Dataset} & \textbf{Method}
& \textbf{Llama-3.1-8B}
& \textbf{Gemma-2-9B-it}
& \textbf{Qwen2.5-7B-Instruct} \\
\midrule
\multirow{4}{*}{\emph{Summarize from Human Feedback}}
& Drift           & 68.32 & 66.32 & 68.88 \\
& Reward          & 64.32 & 67.32 & 71.28 \\
& \textbf{EXACT}  & \textbf{73.81} & \textbf{78.23} & \textbf{72.12} \\
\bottomrule
\end{tabular}}
\caption{Win-rate (\%) of each method against the \emph{pure base LLM} generations, evaluated separately for three base models for generation. For each prompt, Gemini-3-Flash judges which response is better aligned with the user's historical data and preferences.}
\label{tab:winrate}
\end{table*}
Table~\ref{tab:two_datasets_three_models} summarizes personalization-modeling accuracy across base backbones. We find that the base approach behaves differently across the three backbones, which may partly reflect differences in backbone scale and calibration. On Summarize From Human Feedback, EXACT outperforms on LLaMA and Gemma, improving over the base approach by 7.58 points ($57.54\rightarrow65.12$) and 2.63 points ($63.58\rightarrow66.01$), respectively. Drift is more backbone-dependent: it gives a small gain on LLaMA ($57.54\rightarrow58.68$) but drops substantially on Gemma ($63.58\rightarrow58.21$). 

Although we report EXACT with the default budget ($k=3$) in Table~\ref{tab:two_datasets_three_models}, we further examine the effect of varying $k$ and report the corresponding accuracies across backbones in Figure \ref{fig:k_sweep_two_datasets}. Overall, EXACT is fairly insensitive to $k$ over a broad range: increasing $k$ from 1 to a moderate value yields modest gains, while larger $k$ brings diminishing returns. The optimal $k$ varies slightly by backbone (e.g., LLaMA peaks around ($k=5$), Gemma continues improving up to ($k=5$), and Qwen is nearly flat with a small gain at ($k=10$)), suggesting that adding more attributes can introduce redundancy or mild conflicts depending on the base model. We attribute this in part to our prompt design, which does not explicitly encode attribute priorities; as a result, increasing $k$ may add lower-priority or redundant attributes that contribute limited additional signal.

\subsection{Personalized generation quality.}\label{sec:generation}
We adopt an \emph{LLM-as-a-judge} protocol and use Gemini-3-Flash as the evaluator. The judge is conditioned on each user’s historical preference data by serializing each user’s training preference pairs into a compact history block, where each example includes the prompt, the \emph{preferred} response, and the \emph{dispreferred} response. Given a new test prompt and two candidate generations from two algorithms, the judge outputs a binary decision indicating which response better matches the user’s inferred preferences. On \emph{Summarize From Human Feedback}, we evaluate $10$ users and sample $15$ unique test prompts per user; for each prompt, we generate outputs from the same base LLM with identical decoding hyperparameters and compare each baseline against the Base approach. Table~\ref{tab:winrate} shows that EXACT achieves the most consistent win-rates across backbones.

\subsection{Detailed analysis for EXACT}\label{sec_detailed}

We highlight a representative user (ID: \texttt{thott7Xe}) from the Summarize from Human Feedback dataset to illustrate \emph{topic-dependent} preference shifts. After deduplication in the offline indexing stage, the user’s training index reduces to 8 unique prompts, each associated with a greedy-selected attribute subset ($k  = 3$). On the 10 test prompts, top-1 retrieval repeatedly maps to only two training items (selected $6/10$ and $4/10$ times, respectively); critically, the corresponding optimal subsets are \emph{distinct} (e.g., ({\textsc{Formal}, \textsc{Concise}, \textsc{Principled}}) vs.\ ({\textsc{Concise}, \textsc{Emotion}, \textsc{Base}})). This bimodal retrieval behavior indicates that a single user’s preference attributes are not globally consistent, but vary with topics and contexts. Additional case studies of contextual preference shifts are provided in Appendix \ref{app:topic_drift_examples} (Tables \ref{tab:case_len_reversal_simple}–\ref{tab:case_user9_simple}) and Appendix \ref{app_cases}.

We also include an ablation study in Appendix~\ref{app:retrieval_importance} that validates the importance of prompt-similarity-based retrieval in our method. As shown in Table~\ref{tab:obj_pos_rate}, EXACT with retrieval consistently outperforms the non-retrieval variant that applies a single global attribute set across all three base LLMs.

\section{Conclusion}
\label{sec:conclusion}
In this work, we introduced EXACT, a decoding-time personalization framework that learns an explicit and interpretable set of preference attributes from a small number of user-labeled preference pairs, and performs context-relevant attribute retrieval to steer generation for each new query. By decoupling offline attribute indexing from lightweight online inference, EXACT steers generation effectively for each query using only a single base LLM at test time. Theoretically, we provide approximation guarantees for the proposed optimization under mild assumptions, and show that our similarity-based retrieval mechanism can accommodate contextual preference shifts
without pooling preferences across disparate contexts.
Empirically, extensive experiments on two user-level preference datasets across three base LLMs show that EXACT consistently outperforms strong baselines, delivering superior few-shot preference modeling and higher personalized win rates.


\newpage
\section*{Impact Statement}

This paper aims to advance methods for decoding-time personalization in large language models by retrieving prompt-similar preference attributes and using them to steer generation. The primary expected benefit is more efficient and interpretable personalization that can improve user experience in applications such as writing assistance and tutoring. Potential risks include privacy concerns if user preference histories are stored or retrieved insecurely, biased or inappropriate personalization if preferences correlate with sensitive attributes, and misuse for manipulative or overly persuasive outputs. These risks can be mitigated through user consent, data minimization and secure storage, transparency and opt-out controls, and safety guardrails that apply regardless of personalization.

\bibliography{example_paper}
\bibliographystyle{icml2026}

\newpage
\appendix
\onecolumn

\section{Derivation of the KL-regularized optimal policy}
\label{app:kl_closed_form}

We provide a self-contained derivation of the closed-form optimizer of the KL-regularized objective in Eq.~\eqref{eq:kl_reg_rl}. 
For a fixed prompt $x$, consider optimizing over distributions $\pi(\cdot\mid x)$:
\begin{equation}
\max_{\pi(\cdot\mid x)}\;
\sum_{y} \pi(y\mid x)\, r(x,y)
-\beta \sum_{y} \pi(y\mid x)\log\frac{\pi(y\mid x)}{\pi_{\mathrm{base}}(y\mid x)}
\quad
\text{s.t.}\;\sum_y \pi(y\mid x)=1,\;\pi(y\mid x)\ge 0.
\label{eq:app_obj}
\end{equation}

\begin{proposition}[Closed-form optimizer of KL-regularized RLHF \citep{rafailov2023direct}]
\label{prop:kl_closed_form}
For any $\beta>0$ and fixed $x$, the unique maximizer of Eq.~\eqref{eq:app_obj} is
\begin{equation}
\pi_r(y \mid x)
=
\frac{1}{Z(x)} \, \pi_{\mathrm{base}}(y \mid x)
\exp\!\left( \frac{1}{\beta} \, r(x,y) \right),
\label{eq:app_opt_pi}
\end{equation}
where the partition function is
\begin{equation}
Z(x)=\sum_{y}\pi_{\mathrm{base}}(y\mid x)\exp\!\left(\frac{1}{\beta}r(x,y)\right).
\label{eq:app_Z}
\end{equation}
\end{proposition}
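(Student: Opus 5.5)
The approach is to rewrite the objective in Eq.~\eqref{eq:app_obj} as a negative KL divergence to the candidate $\pi_r$, up to a constant independent of $\pi$. Uniqueness and optimality then follow at once from Gibbs' inequality. This avoids Lagrange multipliers and boundary issues with the constraint $\pi(y\mid x)\ge 0$.

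\textbf{Step 1: well-definedness.} First, note that $Z(x)>0$. It is a sum of nonnegative terms, and at least one $y$ has $\pi_{\mathrm{base}}(y\mid x)>0$ while the exponential is strictly positive. I will assume $Z(x)<\infty$, which is automatic for a finite response space or bounded $r$, and state this explicitly. Then $\pi_r$ in Eq.~\eqref{eq:app_opt_pi} is a valid distribution, and it is supported exactly on the support of $\pi_{\mathrm{base}}(\cdot\mid x)$.

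\textbf{Step 2: algebraic rewriting.} Any feasible $\pi$ with finite objective must have $\pi\ll\pi_{\mathrm{base}}$; otherwise the KL term is $+\infty$ and the objective is $-\infty$. For such $\pi$, take logs in Eq.~\eqref{eq:app_opt_pi}:
\[
r(x,y)=\beta\log\frac{\pi_r(y\mid x)}{\pi_{\mathrm{base}}(y\mid x)}+\beta\log Z(x).
\]
Substituting this into the objective and combining the two logarithms gives
\[
\sum_y \pi(y\mid x)\Big[r(x,y)-\beta\log\frac{\pi(y\mid x)}{\pi_{\mathrm{base}}(y\mid x)}\Big]
=\beta\log Z(x)-\beta\,\mathrm{KL}\big(\pi(\cdot\mid x)\,\|\,\pi_r(\cdot\mid x)\big),
\]
where I used $\sum_y\pi(y\mid x)=1$.

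\textbf{Step 3: conclude.} By Gibbs' inequality, $\mathrm{KL}(\pi\|\pi_r)\ge 0$, with equality if and only if $\pi=\pi_r$. Since $\beta>0$, the objective is at most $\beta\log Z(x)$. This bound is attained exactly at $\pi=\pi_r$, which gives both optimality and uniqueness.

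The main obstacle is bookkeeping rather than substance: splitting sums over infinite response spaces requires each piece to be finite. I would handle this by working with finite $\mathcal{Y}$, or by assuming $r$ is bounded so that all sums converge absolutely. With that assumption, the rearrangement in Step 2 is justified and the rest is routine.
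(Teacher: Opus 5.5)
Your proof is correct, but it takes a different route from the paper's. The paper introduces a Lagrange multiplier $\lambda$ for $\sum_y \pi(y\mid x)=1$ and writes the stationarity condition at each $y$ with $\pi(y\mid x)>0$. It solves for $\pi$ up to the factor $\exp((\lambda-\beta)/\beta)$, identifies that factor as $1/Z(x)$ by normalization, and then invokes strict concavity of the negative KL term for uniqueness.

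The paper's approach derives the Gibbs form from scratch, whereas yours has to be handed the candidate $\pi_r$ and then verifies it. In exchange, your identity writing the objective as $\beta\log Z(x)-\beta\,\mathrm{KL}(\pi\,\|\,\pi_r)$ settles everything at once. Global optimality, uniqueness, and the support question (through $\pi\ll\pi_{\mathrm{base}}$) all follow from Gibbs' inequality.

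By contrast, the Lagrangian argument only characterizes a maximizer on its own support. It tacitly needs concavity for two things: to pass from a stationary point to a global maximizer, and to rule out maximizers that vanish somewhere on the support of $\pi_{\mathrm{base}}$. It is also tied to finite-dimensional calculus. Your argument extends verbatim to countable response spaces once $Z(x)<\infty$ and the sums converge.

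Your route also gives the optimal value $\beta\log Z(x)$, the term that reappears in Eq.~\eqref{eq:reward_recovery}. It is essentially the Gibbs-inequality derivation used in the DPO paper that the proposition cites.
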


\begin{proof}
We form the Lagrangian for Eq.~\eqref{eq:app_obj} with multiplier $\lambda$ enforcing $\sum_y \pi(y\mid x)=1$:
\begin{align}
\mathcal{L}(\pi,\lambda)
&=\sum_y \pi(y\mid x)\, r(x,y)
-\beta \sum_y \pi(y\mid x)\log\frac{\pi(y\mid x)}{\pi_{\mathrm{base}}(y\mid x)}
+\lambda\Big(\sum_y \pi(y\mid x)-1\Big).
\end{align}
For any $y$ with $\pi(y\mid x)>0$, the stationarity condition gives
\begin{align}
0
&=\frac{\partial \mathcal{L}}{\partial \pi(y\mid x)}
= r(x,y)
-\beta\Big(\log\frac{\pi(y\mid x)}{\pi_{\mathrm{base}}(y\mid x)}+1\Big)
+\lambda .
\end{align}
Rearranging yields
\begin{equation}
\log\frac{\pi(y\mid x)}{\pi_{\mathrm{base}}(y\mid x)}
=\frac{1}{\beta}\big(r(x,y)+\lambda-\beta\big),
\end{equation}
so
\begin{equation}
\pi(y\mid x)
=\pi_{\mathrm{base}}(y\mid x)\exp\!\left(\frac{1}{\beta}r(x,y)\right)\cdot
\exp\!\left(\frac{\lambda-\beta}{\beta}\right).
\end{equation}
The last factor is a constant w.r.t.\ $y$ and is determined by normalization:
\begin{equation}
1=\sum_y \pi(y\mid x)
=\exp\!\left(\frac{\lambda-\beta}{\beta}\right)
\sum_y \pi_{\mathrm{base}}(y\mid x)\exp\!\left(\frac{1}{\beta}r(x,y)\right).
\end{equation}
Thus $\exp\!\left(\frac{\lambda-\beta}{\beta}\right)=1/Z(x)$ with $Z(x)$ defined in Eq.~\eqref{eq:app_Z}, which gives Eq.~\eqref{eq:app_opt_pi}.
Uniqueness follows since the objective in Eq.~\eqref{eq:app_obj} is strictly concave in $\pi(\cdot\mid x)$ for $\beta>0$ (negative KL is strictly concave).
\end{proof}

\section{Details for Theoretical Analysis}
\label{sec:weak_submodularity_app}

\subsection{Proof for Section \ref{sec:weak_submodularity_main}} 

\subsubsection{Setup and Definitions}
\label{sec:weak_submodularity_app_setup}

We consider the set maximization problem~\eqref{eq:set_max_main} with objective~\eqref{eq:set_obj_main}.
For any $S\subseteq \mathcal{A}$ and $a\in \mathcal{A}\setminus S$, define the marginal gain
\begin{equation}
\Delta(a \mid S) \;:=\; F(S\cup\{a\}) - F(S),
\qquad
\Delta(L \mid S) \;:=\; F(S\cup L) - F(S)
\end{equation}
for $L\subseteq \mathcal{A}\setminus S$.

\begin{definition}[Submodularity ratio (See Definition 2 \cite{JMLR:v19:16-534})]
\label{def:submod_ratio}
The submodularity ratio $\gamma\in(0,1]$ is defined as
\begin{equation}
\gamma
\;:=\;
\inf_{S\subseteq \mathcal{A}} \:
\inf_{L \subseteq \mathcal{A}\setminus S}
\frac{\sum_{a\in L}\Delta(a\mid S)}{\Delta(L\mid S)}.
\end{equation}
\end{definition}

\subsubsection{Assumptions and Discussion}
\label{sec:weak_submodularity_app_assumptions}

\paragraph{Normalized.} It is straightforward to verify that  $F(\varnothing)=0$.

\paragraph{Monotone (practical interpretation).}
While attribute interactions can, in principle, introduce negative marginal gains, the greedy procedure
can be implemented with a non-negativity filter, i.e., only adding attributes with $\Delta(a\mid S)>0$,
so that the selected sequence satisfies $F(S_{t+1})\ge F(S_t)$.
Equivalently, one may view the attribute library as pre-filtered to remove attributes that consistently
exhibit negative marginal gains on held-out preference pairs.
This motivates treating $F$ as (approximately) monotone for the analysis.

\paragraph{Weak submodularity.}
We assume $F$ has a strictly positive submodularity ratio $\gamma$.
Intuitively, $\gamma$ is close to $1$ when higher-order interactions among attributes are mild, and
decreases as interactions strengthen. This notion of \emph{weak submodularity} is commonly quantified by the \emph{submodularity ratio} $\gamma$,
originally introduced by Das and Kempe~\cite{das2011submodular, JMLR:v19:16-534} to measure how close a monotone set function is to being submodular.
Under this definition, $\gamma=1$ recovers exact submodularity, while $\gamma<1$ captures departures due to interactions/overlap among elements.
Moreover, the greedy approximation guarantee we use (scaling with $1-e^{-\gamma}$) is not new and follows prior analyses of greedy maximization under
submodularity ratio (see, e.g.,~\cite{JMLR:v19:16-534, bian2017guarantees}).

\subsubsection{Proof for Theorem \ref{thm:weak_submod_greedy_main}}
\label{sec:weak_submodularity_app_proof}

\paragraph{Greedy algorithm.}
Initialize $S_0=\varnothing$; for $t=0,1,\dots,k-1$,
\begin{equation}
a_{t+1} \in \arg\max_{a\in \mathcal{A}\setminus S_t} \Delta(a\mid S_t),
\qquad
S_{t+1}=S_t\cup\{a_{t+1}\}.
\end{equation}

\begin{theorem}[Greedy guarantee under weak submodularity (restated)]
\label{thm:weak_submod_greedy_app}
Let $S_k$ be the output of greedy selection, and let
$S^* \in \arg\max_{|A|\le k}F(A)$ be an optimal solution.
If $F$ is normalized and monotone and has a submodularity ratio $\gamma$,
then
\begin{equation}
F(S_k) \;\ge\; \big(1-e^{-\gamma}\big)\,F(S^*).
\end{equation}
\end{theorem}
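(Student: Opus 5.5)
The plan is the standard Das--Kempe argument: show that each greedy step closes a $\gamma/k$ fraction of the remaining gap to $F(S^*)$, then unroll the recursion. Throughout, write $S_t$ for the greedy set after $t$ steps and $L_t := S^*\setminus S_t$, so that $|L_t|\le k$.

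\textbf{Step 1 (lower-bound the gain on $L_t$).} By monotonicity, $F(S^*) \le F(S_t\cup S^*) = F(S_t) + \Delta(L_t\mid S_t)$. If $\Delta(L_t\mid S_t)\le 0$, then $F(S_t)\ge F(S^*)$ already, and monotonicity carries this to $F(S_k)$, so we are done. Otherwise, Definition~\ref{def:submod_ratio} applied with $S=S_t$ and $L=L_t$ gives
\begin{equation*}
\sum_{a\in L_t}\Delta(a\mid S_t)\;\ge\;\gamma\,\Delta(L_t\mid S_t)\;\ge\;\gamma\big(F(S^*)-F(S_t)\big).
\end{equation*}

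\textbf{Step 2 (compare with the greedy choice).} The greedy element $a_{t+1}$ maximizes $\Delta(a\mid S_t)$ over $\mathcal{A}\setminus S_t\supseteq L_t$. Since the sum in Step 1 has at most $k$ terms, its maximum term is at least the average, so
\begin{equation*}
F(S_{t+1})-F(S_t)\;=\;\Delta(a_{t+1}\mid S_t)\;\ge\;\frac{\gamma}{k}\big(F(S^*)-F(S_t)\big).
\end{equation*}
This step needs the sum to be positive, which Step 1 guarantees in the case where the bound is actually needed. If $|L_t|<k$, averaging over $|L_t|$ terms gives an even stronger bound.

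\textbf{Step 3 (unroll).} Set $g_t := F(S^*)-F(S_t)$. Step 2 becomes $g_{t+1}\le (1-\gamma/k)\,g_t$. Iterating $k$ times, and using normalization $F(S_0)=F(\varnothing)=0$, gives
\begin{equation*}
g_k\;\le\;(1-\gamma/k)^k\,F(S^*)\;\le\; e^{-\gamma}F(S^*),
\end{equation*}
where the last inequality uses $1-z\le e^{-z}$. Rearranging yields $F(S_k)\ge (1-e^{-\gamma})F(S^*)$. This requires $F(S^*)\ge 0$, which holds by monotonicity and normalization.

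\textbf{Main obstacle.} The only delicate point is the infimum in Definition~\ref{def:submod_ratio}. It ranges over all $S$ and $L$, including those with $\Delta(L\mid S)\le 0$, where the ratio is ill-defined or has the wrong sign. I would treat the infimum as taken over pairs with $\Delta(L\mid S)>0$. Those are exactly the pairs Step 1 uses, because the case $\Delta(L_t\mid S_t)\le 0$ was dispatched separately.

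Monotonicity is used in three places: to get $F(S^*)\le F(S_t\cup S^*)$, to get $F(S^*)\ge 0$, and to propagate the early-termination case to $S_k$. As discussed in Section~\ref{sec:weak_submodularity_app_assumptions}, it is assumed, not derived.
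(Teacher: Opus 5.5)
Your proposal is correct and follows the paper's proof step for step. You apply the submodularity ratio to $(S_t, S^*\setminus S_t)$, use monotonicity to get $\Delta(L_t\mid S_t)\ge F(S^*)-F(S_t)$, bound the greedy gain by the average over $L_t$ using $|L_t|\le k$, and unroll $g_{t+1}\le(1-\gamma/k)\,g_t$ with $(1-\gamma/k)^k\le e^{-\gamma}$; you also handle three points the paper leaves implicit: the degenerate case $\Delta(L_t\mid S_t)\le 0$, where the ratio in Definition~\ref{def:submod_ratio} is undefined, the nonnegativity needed for the averaging step, and $F(S^*)\ge 0$ (the last two follow directly from monotonicity and normalization).
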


\begin{proof}
The bound is established by a one-step progress inequality and then unrolling the recursion.

\emph{Step 1: a key inequality from the submodularity ratio.}
Fix any iteration $t\in\{0,1,\dots,k-1\}$ and consider the current set $S_t$.
Let $L := S^*\setminus S_t$ denote the elements in the optimal set not yet selected.
Note that $L \subseteq \mathcal{A}\setminus S_t$ and $|L|\le |S^*|\le k$.

By Definition~\ref{def:submod_ratio} (submodularity ratio), applied to $(S,L)$ with $S=S_t$,
we have
\begin{equation}
\sum_{a\in L}\Delta(a\mid S_t)
\;\ge\;
\gamma \,\Delta(L\mid S_t)
\;=\;
\gamma\big(F(S_t\cup L)-F(S_t)\big).
\label{eq:gamma_apply_app}
\end{equation}
Since $S_t\cup L = S_t\cup(S^*\setminus S_t)=S_t\cup S^*$, monotonicity implies
$F(S_t\cup S^*) \ge F(S^*)$.
Therefore,
\begin{equation}
F(S_t\cup L)-F(S_t)
\;=\;
F(S_t\cup S^*)-F(S_t)
\;\ge\;
F(S^*)-F(S_t).
\label{eq:mono_lower_app}
\end{equation}
Combining~\eqref{eq:gamma_apply_app} and~\eqref{eq:mono_lower_app} yields
\begin{equation}
\sum_{a\in L}\Delta(a\mid S_t)
\;\ge\;
\gamma\big(F(S^*)-F(S_t)\big).
\label{eq:sum_marginal_lb_app}
\end{equation}

\emph{Step 2: existence of a good element in the remaining optimal set.}
Since $|L|\le k$, the average marginal gain over elements in $L$ satisfies
\begin{equation}
\max_{a\in L}\Delta(a\mid S_t)
\;\ge\;
\frac{1}{|L|}\sum_{a\in L}\Delta(a\mid S_t)
\;\ge\;
\frac{1}{k}\sum_{a\in L}\Delta(a\mid S_t).
\label{eq:max_ge_avg_app}
\end{equation}
Plugging~\eqref{eq:sum_marginal_lb_app} into~\eqref{eq:max_ge_avg_app} gives
\begin{equation}
\max_{a\in L}\Delta(a\mid S_t)
\;\ge\;
\frac{\gamma}{k}\big(F(S^*)-F(S_t)\big).
\label{eq:best_in_L_app}
\end{equation}

\emph{Step 3: greedy makes at least this progress.}
Greedy chooses $a_{t+1}$ maximizing $\Delta(a\mid S_t)$ over all $a\in \mathcal{A}\setminus S_t$,
hence in particular
\begin{equation}
\Delta(a_{t+1}\mid S_t)
\;=\;
\max_{a\in \mathcal{A}\setminus S_t}\Delta(a\mid S_t)
\;\ge\;
\max_{a\in L}\Delta(a\mid S_t).
\label{eq:greedy_dom_app}
\end{equation}
Combining~\eqref{eq:greedy_dom_app} and~\eqref{eq:best_in_L_app} yields the one-step progress inequality
\begin{equation}
F(S_{t+1})-F(S_t)
\;=\;
\Delta(a_{t+1}\mid S_t)
\;\ge\;
\frac{\gamma}{k}\big(F(S^*)-F(S_t)\big).
\label{eq:one_step_app}
\end{equation}

\emph{Step 4: unrolling the recursion.}
Rearranging~\eqref{eq:one_step_app} gives
\begin{equation}
F(S^*)-F(S_{t+1})
\;\le\;
\Big(1-\frac{\gamma}{k}\Big)\big(F(S^*)-F(S_t)\big).
\label{eq:recurrence_app}
\end{equation}
Applying~\eqref{eq:recurrence_app} repeatedly from $t=0$ to $t=k-1$ yields
\begin{equation}
F(S^*)-F(S_k)
\;\le\;
\Big(1-\frac{\gamma}{k}\Big)^k F(S^*),
\label{eq:unroll_app}
\end{equation}
where we used $F(S_0)=F(\varnothing)=0$.

Finally, using the standard inequality $\left(1-\frac{\gamma}{k}\right)^k \le e^{-\gamma}$,
we obtain
\begin{equation}
F(S_k)
\;\ge\;
\Big(1-e^{-\gamma}\Big)F(S^*),
\end{equation}
which completes the proof.
\end{proof}

The submodularity ratio $\gamma$ quantifies how far $F$ deviates from additive behavior:
when interactions are mild, $\gamma$ approaches $1$ and greedy recovers the classical $(1-1/e)$ guarantee;
as interactions strengthen, $\gamma$ decreases, and the bound degrades gracefully.

\subsection{Proofs for Section \ref{subsec:topic_drift_retrieval}}
\label{app:topic_drift_proofs}

\subsubsection{Proof of Pooling's Results}
\label{app:proof_pooling}

Recall $\widehat{\mu}_{\mathrm{pool}}=\frac{1}{m_1}\sum_{i=1}^{m_1}\phi(x_i)$ and $\bar{\mu}=\sum_{c=1}^k p_c\mu_c$.
Under \eqref{eq:mt_mean_model_main}, $\mathbb{E}[\phi(x_i)\mid t(x_i)=c]=\mu_c$ and thus
$\mathbb{E}[\phi(x_i)]=\sum_{c=1}^k p_c\mu_c=\bar{\mu}$, implying $\mathbb{E}[\widehat{\mu}_{\mathrm{pool}}]=\bar{\mu}$.
Hence, by bias--variance decomposition,
\[
\mathbb{E}\|\widehat{\mu}_{\mathrm{pool}}-\mu_j\|_2^2
=
\|\bar{\mu}-\mu_j\|_2^2
+
\mathbb{E}\|\widehat{\mu}_{\mathrm{pool}}-\bar{\mu}\|_2^2.
\]
Since the samples are independent,
\[
\mathbb{E}\|\widehat{\mu}_{\mathrm{pool}}-\bar{\mu}\|_2^2
=
\frac{1}{m_1}\mathrm{tr}\!\left(\mathrm{Cov}(\phi(x))\right)
=
\frac{1}{m_1}\sum_{c=1}^k p_c\,\mathrm{tr}(\Sigma_c),
\]
which proves \eqref{eq:pool_mse_main}. The drift bias term $\|\bar{\mu}-\mu_j\|_2^2$ does not vanish as $m_1\to\infty$.

\subsubsection{Proof of Retrieval's Results (Eq.~\ref{eq:ret_mse_main})}
\label{app:proof_retrieval}

We prove two claims: (i) topic identification succeeds under \eqref{eq:retrieval_sc_main}; (ii) the MSE is variance-dominated.

\paragraph{Step 1: uniform concentration of topic centers.}
For each topic $c$, $\widehat{\mu}_c-\mu_c$ is the mean of $n_c$ i.i.d.\ $\sigma^2$-subGaussian vectors.
A standard vector concentration inequality implies that for $\eta>0$,
\[
\Pr\!\left(\|\widehat{\mu}_c-\mu_c\|_2\ge \eta\right)
\le
2\exp\!\left(-\tilde{c}\,n_c\frac{\eta^2}{\sigma^2} + \tilde{c}' d\right),
\]
for universal constants $\tilde{c},\tilde{c}'>0$.
Choosing $\eta=r/4$ and applying a union bound over $c\in[k]$, condition \eqref{eq:retrieval_sc_main} ensures the event
\[
\mathcal{E}\triangleq \Big\{\max_{c\in[k]}\|\widehat{\mu}_c-\mu_c\|_2\le r/4\Big\}
\]
holds with probability at least $1-\delta/2$ (absorbing constants into $C$).

\paragraph{Step 2: correct nearest-center classification.}
Assume $\mathcal{E}$ holds and the test prompt satisfies $\phi(x)=\mu_j+\varepsilon$ with $t(x)=j$.
For any $c\neq j$, by \eqref{eq:topic_sep_main} and triangle inequality,
\[
\|\widehat{\mu}_c-\widehat{\mu}_j\|_2
\ge
\|\mu_c-\mu_j\|_2 - \|\widehat{\mu}_c-\mu_c\|_2 - \|\widehat{\mu}_j-\mu_j\|_2
\ge
r-\frac{r}{4}-\frac{r}{4}=\frac{r}{2}.
\]
Also,
\[
\|\phi(x)-\widehat{\mu}_j\|_2 \le \|\varepsilon\|_2 + \frac{r}{4}.
\]
If additionally $\|\varepsilon\|_2\le r/8$ (which holds with probability at least $1-\delta/2$ by subGaussian tails
after adjusting constants in \eqref{eq:retrieval_sc_main}), then for any $c\neq j$,
\[
\|\phi(x)-\widehat{\mu}_c\|_2
\ge
\|\widehat{\mu}_c-\widehat{\mu}_j\|_2 - \|\phi(x)-\widehat{\mu}_j\|_2
\ge
\frac{r}{2}-\left(\frac{r}{8}+\frac{r}{4}\right)=\frac{r}{8},
\]
while $\|\phi(x)-\widehat{\mu}_j\|_2\le \frac{r}{8}+\frac{r}{4}=\frac{3r}{8}$.
Thus $\widehat{j}(x)=j$ on this event. Combining with Step 1 yields $\Pr(\widehat{j}(x)=j)\ge 1-\delta$.

\paragraph{Step 3: MSE decomposition.}
Decompose by correct vs.\ incorrect retrieval:
\[
\mathbb{E}\|\widehat{\mu}_{\mathrm{ret}}(x)-\mu_j\|_2^2
\le
\Pr(\widehat{j}(x)=j)\cdot \mathbb{E}\|\widehat{\mu}_j-\mu_j\|_2^2
+
\Pr(\widehat{j}(x)\neq j)\cdot \sup_{c\neq j}\|\widehat{\mu}_c-\mu_j\|_2^2.
\]
The first term is $\mathrm{tr}(\Sigma_j)/n_j$.
For the second term, using separation and $\mathcal{E}$, we can bound $\|\widehat{\mu}_c-\mu_j\|_2\le 2r$
(up to constants), hence the squared penalty is at most $4r^2$.
Since $\Pr(\widehat{j}(x)\neq j)\le \delta$, we obtain
\[
\mathbb{E}\|\widehat{\mu}_{\mathrm{ret}}(x)-\mu_j\|_2^2
\le
\frac{\mathrm{tr}(\Sigma_j)}{n_j} + 4\delta r^2,
\]
which proves the result in \eqref{eq:ret_mse_main}.

\section{Experiment}

\subsection{Experimental Setup}
\label{sec:exp_setup}

\paragraph{Compute environment.}
All experiments are run on a single NVIDIA H100 GPU (80GB). Unless otherwise stated, we use the same base instruction-tuned LLM backbone for all generation-based methods within a comparison group (e.g., Llama-3.1-8B-Instruct, Gemma-2-9B-it, or Qwen2.5-7B-Instruct), and keep decoding hyperparameters fixed across methods for fairness.

\paragraph{Decoding hyperparameters.}
For all generation-based methods, we use top-p sampling with \texttt{do\_sample=True}, temperature $0.7$, and $top_p=0.95$.
To ensure a fair comparison across methods, we fix the decoding configuration within each dataset and only vary the method-specific components (e.g., retrieved attributes or scoring signals).
We cap the number of \emph{newly generated tokens} to control output length: max\_new\_tokens$=200$ for \emph{PRISM} and max\_new\_tokens$=300$ for \emph{Summarize from Human Feedback}.
We do not use beam search (num\_beams$=1$) and stop generation at the model EOS token (or equivalent chat end token) when it appears.

\subsubsection{Base: History-Injection Prompting}
\label{sec:setup_base}
\paragraph{Setup.}
\textbf{Base} is a prompt-engineering baseline that conditions the model on the user’s historical preference pairs by serializing training comparisons into a system prompt (history-injection). Concretely, we include up to \texttt{max\_history\_pairs}=200 training pairs and truncate the resulting history to \texttt{max\_history\_chars}=20000 characters to control context length. We then query the same backbone LLM used by other methods and keep decoding hyperparameters fixed. For chat models whose templates do not support the \texttt{system} role (e.g., Gemma), we apply a robust fallback by merging system text into the user message to ensure consistent formatting.

\subsubsection{Drift: Differential Scoring with a Small Model}
\label{sec:setup_drift}
\paragraph{Setup.}
\textbf{Drift} uses a dual-model configuration: a large LLM for generation (\texttt{llm\_model}) and a smaller instruction-tuned model for differential scoring (\texttt{slm\_model}). The key hyperparameter is $\beta$ (default \texttt{beta}=1.0), which controls the strength of the preference signal from the small model. When the token vocabularies of the two models differ, we enable an approximate token-string mapping to align scores across vocabularies. Drift does \emph{not} rely on an explicit reward model or additional alignment fine-tuning beyond the pretrained instruction-tuned checkpoints.

\subsubsection{RM: Personalized Reward Modeling (LoRA)}
\label{sec:setup_rm}
\paragraph{Setup.}
\textbf{RM} trains a per-user reward model using pairwise preference data. We instantiate a sequence-classification reward model (\texttt{rm\_model}) and fine-tune it with LoRA (\texttt{lora\_r}=16, \texttt{lora\_alpha}=32, \texttt{lora\_dropout}=0.05) for \texttt{epochs}=3 using AdamW with learning rate \texttt{lr}=1e{-4}, batch size \texttt{batch\_size}=8, and max sequence length \texttt{max\_length}=1024.

\subsubsection{EXACT} \label{sec:setup_EXACT}

\paragraph{Setup.} We tested multiple embedding models for similarity-based retrieval: BGE-small, BGE-base, MiniLM, E5-small, and E5-base. BGE-small achieved the best performance while being the most lightweight.

\subsection{Used Attributes for Characterizing Users' Preferences}
\label{sec:attributes}

We use an interpretable attribute vocabulary to characterize users' preferences, following the attribute design in Drift~\citep{kim2025drift}. 
The attributes cover multiple dimensions of behavior, including \emph{style} (e.g., \textit{Formal}, \textit{Concise}, \textit{Vivid}), 
\emph{stance and tone} (e.g., \textit{Modest}, \textit{Direct}, \textit{Critical}, \textit{Empathetic}), 
\emph{reasoning and expertise} (e.g., \textit{Analytic}, \textit{Engineer}, \textit{Code}), 
and \emph{values and worldview} (e.g., \textit{Principled}, \textit{Utilitarian}, \textit{Hedonist}, \textit{Collectivist}). 
These attributes are not mutually exclusive and may overlap in practice. 
For instance, \textit{Formal} often co-occurs with \textit{Respect}, while \textit{Direct} and \textit{Concise} both encourage brevity. 
Similarly, \textit{Analytic} may align with \textit{Engineer} or \textit{Code} depending on the task context.
Table~\ref{tab_attribute} lists the full attribute set used in our experiments.

\begin{table}[tbp]
\centering
\small
\begin{threeparttable}
\begin{tabular}{l l}
\toprule
Attribute & Attribute \\
\midrule
Base & Creative \\
Formal & Analytic \\
Concise & Empathetic \\
Vivid & Sycophant \\
Modest & Old-fashioned \\
Engineer & Meritocratic \\
Persuasive & Myopic \\
Emotion & Principled \\
Humor & Hedonist \\
Energy & Utilitarian \\
Code & Realist \\
Easy & Pessimistic \\
Direct & Storyteller \\
Social & Flexible \\
Western & Spontaneous \\
Eastern & Collectivist \\
Respect & Individualistic \\
Internet Slang & Exclamatory \\
Proverb & Conspiracy \\
Critical & Tech Industry Priority \\
Vague & Eco-friendly \\
\bottomrule
\end{tabular}
\end{threeparttable}\caption{Attribute vocabulary used in our experiments to characterize users' preferences.}
\label{tab_attribute}
\end{table}

\subsection{Examples for User Queries}
\label{app:example_user_queries}

Figures~\ref{fig:case_prism_1}--\ref{fig:case_sff_3} provide qualitative examples from \emph{PRISM} and \emph{Summarize from Human Feedback}.
Each figure follows the same template: we show the original user query, the retrieved attribute set, the resulting attribute-augmented prompt, and a representative win/lose pair used for pairwise preference evaluation.

\begin{figure*}[tbp]
\centering
\CaseStudyTemplate
{do you think there are ufo sightings hidden from us?}
{\textit{Direct}, \textit{Concise}, \textit{Analytic}}
{User: do you think there are ufo sightings hidden from us?}
{User: do you think there are ufo sightings hidden from us?\newline
Attributes: <Direct, Concise, Analytic>}
{It is possible some sightings are not fully disclosed due to classified programs, but evidence is inconclusive. Plausible reasons include: (1) national security, (2) ongoing investigations, (3) reporting bias.}
{There are many reports, some explained and others unexplained; however, there is no conclusive evidence that sightings are intentionally hidden.}
\caption{\textbf{Case 1 (PRISM).} Attribute-guided prompt modification by appending keywords (``Attributes: $<...>$'') to the user prompt, and an example win/lose pair used for pairwise preference evaluation.}
\label{fig:case_prism_1}
\end{figure*}

\begin{figure*}[tbp]
\centering
\CaseStudyTemplate
{Hello, what do you think about fantasy books?}
{\textit{Enthusiastic}, \textit{Elaborate}, \textit{Imaginative}}
{User: Hello, what do you think about fantasy books?}
{User: Hello, what do you think about fantasy books?\newline
Attributes: <Enthusiastic, Elaborate, Imaginative>}
{Fantasy books are a delightful doorway into other worlds---full of magic systems, strange creatures, and epic quests. They let us explore courage, friendship, and wonder through stories that feel larger than life, often with rich lore and vivid settings that invite the imagination to run wild.}
{Fantasy books are enjoyable and creative. They often feature imaginative worlds and compelling adventures. If you tell me what you like (cozy vs.\ epic, romance vs.\ gritty), I can recommend a few series.}
\caption{\textbf{Case 2 (PRISM).} Attribute-guided prompt modification with an example win/lose pair.}
\label{fig:case_prism_2}
\end{figure*}

\begin{figure*}[tbp]
\centering
\CaseStudyTemplate
{Which are the most important laws to follow?}
{\textit{Comprehensive}, \textit{Specific}, \textit{Practical}}
{User: Which are the most important laws to follow?}
{User: Which are the most important laws to follow?\newline
Attributes: <Comprehensive, Specific, Practical>}
{Practically, prioritize laws that protect safety and rights: (1) laws against violence and threats, (2) theft and fraud statutes, (3) traffic and impaired-driving rules, (4) workplace and housing anti-discrimination laws, and (5) basic civic obligations like taxes. If you share your country/state, I can list the most relevant examples and common pitfalls.}
{All laws matter because they maintain order. We should follow laws in general and respect authority, though laws can be debated and improved over time.}
\caption{\textbf{Case 3 (PRISM).} Attribute-guided prompt modification with an example win/lose pair.}
\label{fig:case_prism_3}
\end{figure*}

\begin{figure*}[tbp]
\centering
\CaseStudyTemplate
{Summarize the following post for a busy reader.}
{\textit{Concise}, \textit{Faithful}, \textit{Neutral}}
{User: Summarize the following post for a busy reader.\newline
\textbf{Post:} ``Our team shipped a new feature after two months of iteration. Early users liked the speed, but some found the UI confusing. We ran A/B tests, simplified the layout, and saw support tickets drop by 30\%. Next we will focus on accessibility and documentation.''}
{User: Summarize the following post for a busy reader.\newline
\textbf{Post:} ``Our team shipped a new feature after two months of iteration. Early users liked the speed, but some found the UI confusing. We ran A/B tests, simplified the layout, and saw support tickets drop by 30\%. Next we will focus on accessibility and documentation.''\newline
Attributes: <Concise, Faithful, Neutral>}
{The team launched a new feature after two months. Users praised speed but found the UI confusing; after A/B testing and simplifying the layout, support tickets fell by 30\%. Next steps include improving accessibility and documentation.}
{The team worked hard to build an amazing feature. Users loved it, and the company is excited to keep improving things in the future.}
\caption{\textbf{Case 1 (Summarize from Human Feedback).} Attribute-guided summarization prompt with a representative preferred vs.\ dispreferred summary.}
\label{fig:case_sff_1}
\end{figure*}

\begin{figure*}[tbp]
\centering
\CaseStudyTemplate
{Write a short summary highlighting the key takeaway and evidence.}
{\textit{Analytic}, \textit{Structured}, \textit{Specific}}
{User: Write a short summary highlighting the key takeaway and evidence.\newline
\textbf{Post:} ``A city piloted free bus fares for six months. Ridership increased 18\%, especially on weekends. Surveyed riders cited cost savings and convenience. However, overcrowding rose on two routes and the city added buses to address it.''}
{User: Write a short summary highlighting the key takeaway and evidence.\newline
\textbf{Post:} ``A city piloted free bus fares for six months. Ridership increased 18\%, especially on weekends. Surveyed riders cited cost savings and convenience. However, overcrowding rose on two routes and the city added buses to address it.''\newline
Attributes: <Analytic, Structured, Specific>}
{\textbf{Takeaway:} Free fares increased public transit use.\newline
\textbf{Evidence:} Ridership rose 18\% (largest on weekends) and surveys reported cost savings and convenience.\newline
\textbf{Caveat:} Overcrowding worsened on two routes, requiring additional buses.}
{The city made buses free and people used them more. There were also some issues, but overall it seemed positive.}
\caption{\textbf{Case 2 (Summarize from Human Feedback).} Attribute-guided summarization emphasizing structure and specificity.}
\label{fig:case_sff_2}
\end{figure*}

\begin{figure*}[tbp]
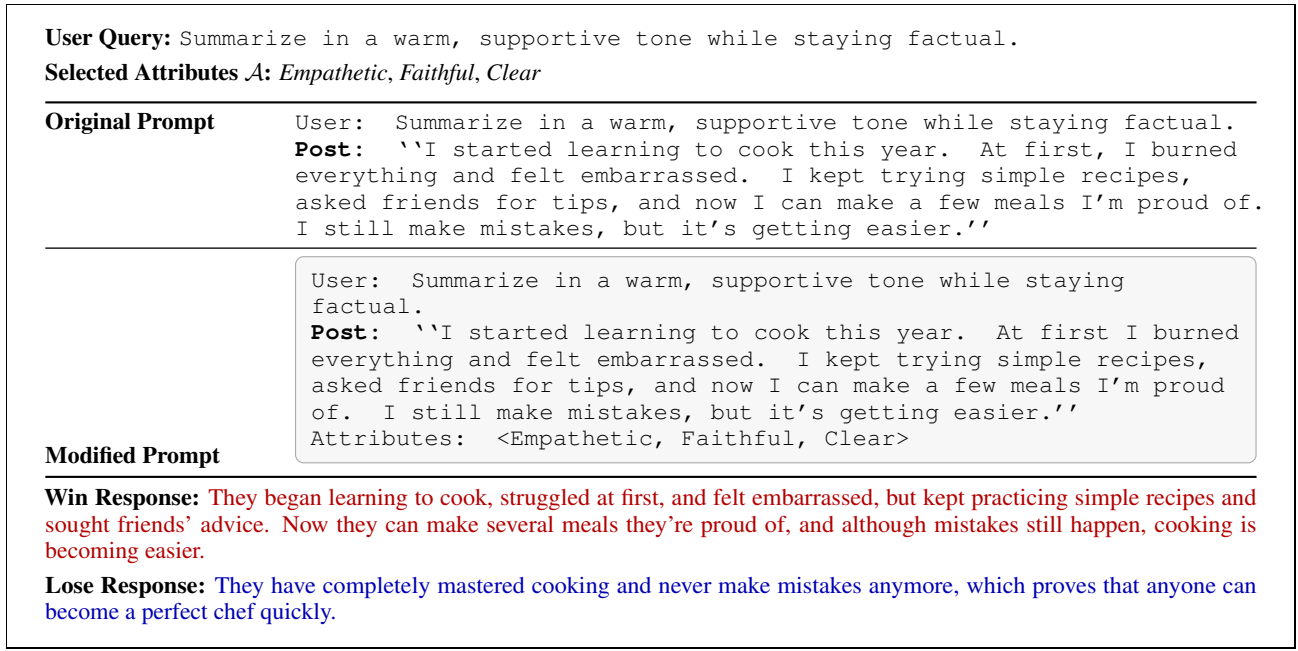

\centering
\CaseStudyTemplate
{Summarize in a warm, supportive tone while staying factual.}
{\textit{Empathetic}, \textit{Faithful}, \textit{Clear}}
{User: Summarize in a warm, supportive tone while staying factual.\newline
\textbf{Post:} ``I started learning to cook this year. At first, I burned everything and felt embarrassed. I kept trying simple recipes, asked friends for tips, and now I can make a few meals I’m proud of. I still make mistakes, but it’s getting easier.''}
{User: Summarize in a warm, supportive tone while staying factual.\newline
\textbf{Post:} ``I started learning to cook this year. At first I burned everything and felt embarrassed. I kept trying simple recipes, asked friends for tips, and now I can make a few meals I’m proud of. I still make mistakes, but it’s getting easier.''\newline
Attributes: <Empathetic, Faithful, Clear>}
{They began learning to cook, struggled at first, and felt embarrassed, but kept practicing simple recipes and sought friends’ advice. Now they can make several meals they’re proud of, and although mistakes still happen, cooking is becoming easier.}
{They have completely mastered cooking and never make mistakes anymore, which proves that anyone can become a perfect chef quickly.}
\caption{\textbf{Case 3 (Summarize from Human Feedback).} Attribute-guided summarization balancing warmth with factual faithfulness.}
\label{fig:case_sff_3}
\end{figure*}

\subsection{Examples of Contextual Preference Shifts}
\label{app:topic_drift_examples}

Besides the examples we provide in Table \ref{tab:merged_case_preference_reversal}, we provide additional case studies illustrating \emph{contextual preference shifts}: the same user may prefer different interaction styles across prompts, leading to systematic reversals (e.g., detailed vs.\ concise, interactive vs.\ informational) under different topics.
Tables~\ref{tab:case_len_reversal_simple} and~\ref{tab:case_user9_simple} highlight two representative patterns.

\begin{table*}[tbp]
\centering
\small
\setlength{\tabcolsep}{6pt}
\renewcommand{\arraystretch}{1.18}

\begin{tabular}{
  >{\raggedright\arraybackslash}p{0.12\linewidth}
  >{\raggedright\arraybackslash}p{0.12\linewidth}
  >{\raggedright\arraybackslash}p{0.26\linewidth}
  >{\raggedright\arraybackslash}p{0.25\linewidth}
  >{\raggedright\arraybackslash}p{0.25\linewidth}
}
\toprule
\textbf{Dataset} & \textbf{User} & \textbf{Prompt (Title)} & \textbf{Response A} & \textbf{Response B} \\
\midrule

\multirow{2}{*}{\textbf{PRISM}}
& \multirow{2}{*}{\texttt{RgH765...}}
& \cellwrap{\textbf{P1:} \textit{I [20M] and girl... (FWB)}}
& \cellwrap{\nopref\;\textit{``Ex and I broke up. I don't know what to do.''}\;
{\footnotesize(\mShort{short})}}
& \cellwrap{\pref\;\textit{``...fell in love... boyfriend in Japan... manly tears...''}\;
{\footnotesize(\mLong{longer / detailed})}} \\
\midrule

\multirow{2}{*}{\textbf{PRISM}}
& \multirow{2}{*}{\texttt{RgH765...}}
& \cellwrap{\textbf{P2:} \textit{My [30F] dad [51M] is coming over...}}
& \cellwrap{\pref\;\textit{``My dad is coming tonight. I'm nervous...''}\;
{\footnotesize(\mShort{short / concise})}}
& \cellwrap{\nopref\;\textit{``6'8'' gay, divorced dad... EDIT: ... I'm going to go drink first...''}\;
{\footnotesize(\mLong{longer})}} \\

\bottomrule
\end{tabular}

\caption{
\textbf{Case study (Prism): length/detail preference reversal within the same user.}
User \texttt{RgH765...} prefers a \mLong{longer, more detailed} summary in P1 but a \mShort{shorter, more concise} summary in P2. Colored words mark the salient style cue.
}
\label{tab:case_len_reversal_simple}
\end{table*}

\begin{table*}[tbp]
\centering
\small
\setlength{\tabcolsep}{6pt}
\renewcommand{\arraystretch}{1.18}

\begin{tabular}{
  >{\raggedright\arraybackslash}p{0.12\linewidth}
  >{\raggedright\arraybackslash}p{0.12\linewidth}
  >{\raggedright\arraybackslash}p{0.26\linewidth}
  >{\raggedright\arraybackslash}p{0.25\linewidth}
  >{\raggedright\arraybackslash}p{0.25\linewidth}
}
\toprule
\textbf{Dataset} & \textbf{User} & \textbf{Prompt (Title)} & \textbf{Response A} & \textbf{Response B} \\
\midrule

\multirow{2}{*}{\makecell[l]{\textbf{Summarize from}\\\textbf{Human Feedback}}}
& \multirow{2}{*}{\texttt{user9}}
& \cellwrap{\textbf{4A:} \textit{Write a poem about my white cat.}}
& \cellwrap{\pref\;\textit{``What kind of poem are you thinking of? ... haiku, sonnet, or free-form?''}\;
{\footnotesize(\mInter{interactive questions})}}
& \cellwrap{\nopref\;\textit{``Here is a 47-word poem about a white cat: ...''}\;
{\footnotesize(\mInfo{direct answer})}} \\
\midrule

\multirow{2}{*}{\makecell[l]{\textbf{Summarize from}\\\textbf{Human Feedback}}}
& \multirow{2}{*}{\texttt{user9}}
& \cellwrap{\textbf{4B:} \textit{People should not eat meat... strongly encouraged.}}
& \cellwrap{\pref\;\textit{``Encouraging a meat-free lifestyle can improve health and reduce environmental impact...''}\;
{\footnotesize(\mInfo{informational})}}
& \cellwrap{\nopref\;\textit{``Would you like me to go into more detail?''}\;
{\footnotesize(\mInter{follow-up question})}} \\

\bottomrule
\end{tabular}

\caption{
\textbf{Case study (Summarize from Human Feedback): interactive vs informational preference reversal.}
User \texttt{user9} prefers \mInter{interactive clarification} for a creative task (4A) but \mInfo{direct informational} delivery for an opinion/value task (4B).
}
\label{tab:case_user9_simple}
\end{table*}

\subsection{Details in Our Method}

\subsubsection{Empirical Validation of the Greedy Approximation.}\label{app_empiricalvalidation}

While Theorem~\ref{thm:weak_submod_greedy_main} provides a constant-factor approximation under weak submodularity,
we additionally validate the greedy approximation directly on the Summarize from Human Feedback dataset when exhaustive search is feasible ($k{=}3$).
Table~\ref{tab:greedy_vs_optimal_gamma} shows that greedy achieves near-optimal objective values across base models,
with a high ratio $\bar F(S_k)/\bar F(S^*)$ and a small absolute gap, supporting greedy as an effective and scalable
attribute selection procedure in practice.

\begin{table}[h]
\centering
\small
\setlength{\tabcolsep}{6pt}
\renewcommand{\arraystretch}{1.15}
\begin{tabular}{lccc}
\toprule
\textbf{Metric ($k{=}3$)} & \textbf{Llama-3.1-8B} & \textbf{Gemma-2-9B-it} & \textbf{Qwen2.5-7B} \\
\midrule
$F(S_k)$ (Greedy)                 & 30.34  & 31.23 & 30.18 \\
$F(S^*)$ (Overall/Exhaustive)     & 33.38  & 32.23 & 33.20 \\
\midrule
$r = F(S_k)/F(S^*)$               & 0.90 & 0.97 & 0.91 \\
$\Delta = F(S^*)-F(S_k)$          & 3.04   & 1.00   & 3.02 \\
\bottomrule
\end{tabular}
\caption{\textbf{Greedy vs.\ exhaustive objective.}
We report the objective values achieved by greedy selection ($F(S_k)$) and exhaustive search ($F(S^*)$).
The ratio $r$ measures the empirical approximation quality of greedy, and $\Delta$ is the absolute optimality gap.}
\label{tab:greedy_vs_optimal_gamma}
\end{table}

\subsubsection{Retrieval v.s Non-retrieval}
\label{app:retrieval_importance}

Our similarity-based attribute injection framework can be instantiated in two ways: a \emph{retrieval} variant that selects attributes conditioned on the test prompt, and a \emph{non-retrieval} variant that applies a single global attribute set to all test prompts. Concretely, \emph{Retrieval} computes the embedding of a test prompt and retrieves the most similar training prompt in the user’s history; it then injects the attribute subset associated with that retrieved training instance. In contrast, \emph{Non-retrieval} ignores the test prompt and always injects the single attribute set that achieves the highest objective score on the training data.

We evaluate both variants using \emph{personalization modeling accuracy}, defined as the fraction of test preference pairs for which the injected attributes yield an improvement in the baseline-anchored objective (\(\text{obj\_match} > 0\)), i.e., the injected prompt increases the preferred-vs-dispreferred log-probability gap relative to the Base prompt. As shown in Table~\ref{tab:obj_pos_rate}, retrieval consistently improves personalization modeling accuracy across all three backbones: \emph{Retrieval} achieves \(65.12\%\), \(66.01\%\), and \(60.75\%\) (Llama-3.1-8B, Gemma-2-9B-it, and Qwen2.5-7B), compared to \(57.43\%\), \(57.21\%\), and \(54.23\%\) for the non-retrieval baseline.

These results highlight the importance of accounting for the preference shift among different topics and contexts for each user. A single high-scoring attribute set can capture a user’s \emph{global} stylistic preference (e.g., consistently concise responses), but it cannot adapt to \emph{topic-dependent} preference shifts that arise across heterogeneous prompts. Prompt-similarity retrieval provides a lightweight mechanism to condition attribute injection on the current context, effectively routing the model to different preference descriptors for different topics. This context-aware attribute selection yields more frequent positive improvements in the preference gap, demonstrating that retrieval is a key ingredient for robust personalization modeling.

\begin{table}[h]
\centering
\small
\setlength{\tabcolsep}{10pt}
\renewcommand{\arraystretch}{1.25}
\begin{tabular}{lccc}
\toprule
\textbf{Method} & \textbf{Llama-3.1-8B} & \textbf{Gemma-2-9B-it} & \textbf{Qwen2.5-7B} \\
\midrule
EXACT (with Retrieval)      & 65.12 & 66.01 & 60.75  \\
EXACT (without Retrieval)   & 57.43 & 57.21 & 54.23 \\
\bottomrule
\end{tabular}
\caption{\textbf{Accuracy ($\%$) in Personalization modeling.} We compare prompt-similarity \emph{retrieval} against a \emph{non-retrieval} variant that applies a single highest-objective attribute set to every test prompt in the Summarize from Human Feedback.}
\label{tab:obj_pos_rate}
\end{table}

\subsection{Further Case Study}\label{app_cases}


We analyze the user (ID: \texttt{d8YBBtVzdVnMLZuzqg88ES4klUw4u7}) from the Summarize From Human Feedback dataset to contrast preference structure under the same pipeline. After filtering low-gain examples and deduplicating by prompt identity, the training index contains 10 unique prompts, each with a greedy-selected optimal attribute subset ($k{=}3$). In this case, the most frequent attributes include \textsc{Internet Slang} (4/10) together with expressive styles such as \textsc{Vivid} and \textsc{Exclamatory} (3/10 each), suggesting a more colloquial and high-energy global profile. At inference, hard top-1 retrieval again concentrates on only two training items (each used for 50\% of the test prompts), but the two retrieved optimal subsets are \emph{qualitatively different}---\{\textsc{Creative}, \textsc{Vivid}, \textsc{Exclamatory}\} versus \{\textsc{Humor}, \textsc{Easy}, \textsc{Old-fashioned}\}---highlighting that even within a single user, different topics can trigger distinct preference ``modes'' rather than a single fixed attribute set.

We also study a third user (ID: \texttt{bt1XW9PqRFCxDMGJ3V8V32AyyojIrE}) from the Summarize From Human Feedback dataset. After filtering and deduplication, the training index has 10 unique prompts with optimal subsets ($k{=}3$). The training-time selections show a global tendency toward \textsc{Concise} (5/10), often paired with \textsc{Exclamatory} (4/10) and \textsc{Humor} (3/10). At inference, top-1 retrieval is strongly concentrated: 11 test prompts select only two training items as nearest neighbors (72.7\% vs.\ 27.3\%). Importantly, these two retrieved optimal subsets are \emph{not the same}---\{\textsc{Internet Slang}, \textsc{Concise}, \textsc{Humor}\} versus \{\textsc{Formal}, \textsc{Concise}, \textsc{Exclamatory}\}---again indicating topic-dependent shifts where retrieval switches between distinct attribute modes for the same user.


\end{document}